\documentclass{article}

\usepackage{microtype}
\usepackage{graphicx}
\usepackage{subcaption}
\usepackage{booktabs} 
\usepackage{graphicx} 
\usepackage{marvosym}

\usepackage{hyperref}

\usepackage[preprint]{icml2026}

\usepackage{amsmath}
\usepackage{amssymb}
\usepackage{mathtools}
\usepackage{amsthm}
\usepackage[table]{xcolor}
\newcommand{\best}[1]{\textbf{#1}}
\newcommand{\ours}[1]{\cellcolor{green!10}{#1}} 

\usepackage[capitalize,noabbrev]{cleveref}

\theoremstyle{plain}
\newtheorem{theorem}{Theorem}[section]

\newtheorem{lemma}[theorem]{Lemma}

\theoremstyle{definition}

\newtheorem{assumption}[theorem]{Assumption}
\theoremstyle{remark}

\newcommand{\Framework}{LR-DLLM}

\usepackage[textsize=tiny]{todonotes}
\usepackage{threeparttable}
\usepackage[table]{xcolor}
\usepackage{multirow}
\usepackage{listings}
\definecolor{prompt}{RGB}{59, 130, 246}      
\definecolor{iter0}{RGB}{52, 211, 153}       
\definecolor{iter1}{RGB}{251, 191, 36}       
\definecolor{iter2}{RGB}{168, 85, 247}       
\definecolor{iter3}{RGB}{239, 68, 68}        
\usepackage{graphicx}
\usepackage{subcaption} 
\usepackage[T1]{fontenc}
\usepackage{newtxtext}
\usepackage{newtxmath}
\usepackage{algorithm}
\usepackage{algorithmic}
\usepackage{amsmath}
\usepackage{subcaption} 
\usepackage[breakable]{tcolorbox}
\definecolor{prompt}{RGB}{60,120,200} 

\usepackage{pifont}
\newcommand{\cmark}{\ding{51}} 
\newcommand{\xmark}{\ding{55}} 

\icmltitlerunning{Improving Variable-Length Generation in Diffusion Language Models via Length Regularization}

\begin{document}

\twocolumn[
  \icmltitle{
Improving Variable-Length Generation in Diffusion Language Models via Length Regularization}



\icmlsetsymbol{equal}{*}

\begin{icmlauthorlist}
\icmlauthor{Zicong Cheng}{1}
\icmlauthor{Ruixuan Jia}{1}
\icmlauthor{Jia Li}{1}
\icmlauthor{Guo-Wei Yang}{2}
\icmlauthor{Meng-Hao Guo}{1}
\icmlauthor{Shi-Min Hu}{1}
\end{icmlauthorlist}

\icmlaffiliation{1}{Tsinghua University}
\icmlaffiliation{2}{Proxseer Inc.}


\icmlkeywords{Machine Learning, ICML}

\vskip 0.3in
]

\printAffiliationsAndNotice{\ }

\begin{abstract}
Diffusion Large Language Models (DLLMs) are inherently ill-suited for variable-length generation, as their inference is defined on a fixed-length  canvas and implicitly assumes a known target length.
When the length is unknown, as in realistic completion  and infilling, naively comparing confidence across mask lengths becomes systematically biased, leading to under-generation or redundant continuations.
In this paper, we show that this failure arises from an intrinsic length-induced bias in generation confidence estimates, leaving existing DLLMs without a robust way to determine generation length and making variable-length inference unreliable. 
To address this issue, we propose \Framework{}, a length-regularized inference framework for DLLMs that treats generation length as an explicit variable and achieves reliable length determination at inference time.
It decouples semantic compatibility from length-induced uncertainty through an explicit length regularization that corrects biased confidence estimates. 
Based on this, \Framework{} enables dynamic expansion or contraction of the generation span without modifying the underlying DLLM or its training procedure. Experiments show that \Framework{} achieves \textbf{51.3\%} Pass@1 on HumanEval-Infilling under fully unknown lengths (\textbf{+13.4\%} vs. DreamOn) and \textbf{51.5\%} average Pass@1 on four-language McEval (\textbf{+14.3\%} vs. DreamOn).

\end{abstract}

\section{Introduction}


DLLMs are emerging as a promising alternative to autoregressive language models~\cite{DreamCoder,LLADA,LLADA-1.5,LLADA2},
offering a different generation paradigm based on bidirectional denoising over a fixed-length canvas, 
rather than left-to-right factorization.
In particular, DLLMs are well suited for conditional generation scenarios such as fill-in-the-middle (FIM), where both prefix and suffix context are available. 
However, despite these advantages, 
popular DLLMs remain fundamentally designed for fixed-length generation, implicitly assuming that the target span length is known in advance.

This assumption is frequently violated in realistic settings.
In practical completion and infilling tasks,
the length of the missing span is often unknown and variable, and forcing inference to operate on a pre-fixed canvas can lead to systematic failures.
When the chosen length is shorter than required, DLLMs tend to under-generate, producing incomplete or semantically invalid outputs; when the length is overestimated, they exhibit redundant continuations.
Therefore, variable-length generation remains a critical bottleneck that limits the applicability of DLLMs in real-world scenarios


\begin{figure}[t]
    \centering
    \includegraphics[width=\linewidth]{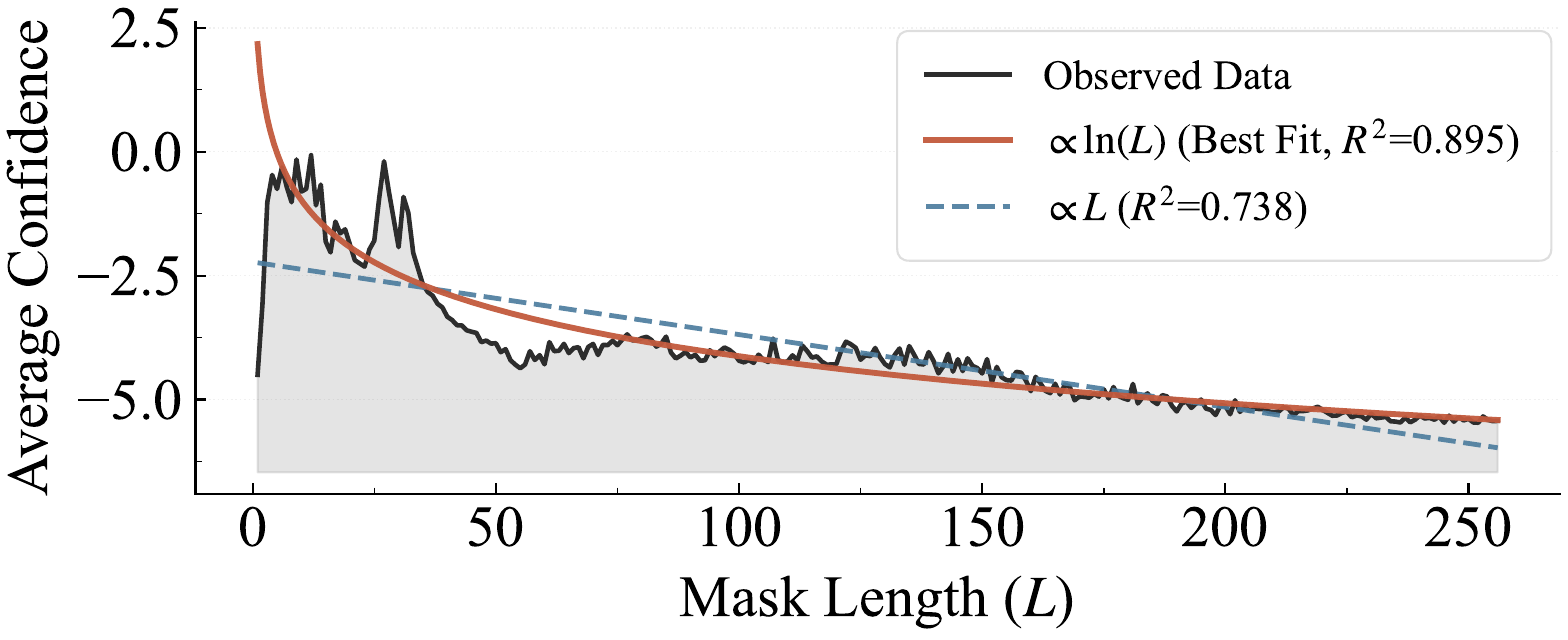}
    \caption{Average confidence as a function of mask length L (log-scaled x-axis). The observed trend is well characterized by a logarithmic dependence ($\propto \log L$), while a linear fit provides a noticeably poorer approximation.}
    \label{fig:motivation}
\end{figure}

Existing attempts at variable-length generation in DLLMs can be broadly divided into training-based and training-free approaches,
yet both suffer from limitations for solving this challenge.
On the one hand, training-based methods typically introduce specialized objectives or auxiliary components,
which require additional retraining and interfere with the model’s original language modeling capability.
On the other hand, training-free approaches are typically more lightweight, but often rely on special control tokens or only allow one-sided adjustments, which restrict flexibility~\cite{DAEDAL}. 
As a result, current solutions still fall short of being simultaneously training-free, model-agnostic, and task-agnostic (Table~\ref{tab:method_comparison}).
In this paper, we focus on developing a training-free variable-length generation that preserves model generality while enabling flexible length adaptation.

\begin{table}[t]
\centering
\caption{Comparison of key properties across variable-length generation methods.
($^{\dagger}$ Task-agnostic here refers to the ability to support both fill-in-the-middle (FIM) and standard left-to-right completion.)
}
\label{tab:method_comparison}
\resizebox{0.48\textwidth}{!}{ 
\begin{tabular}{lcccc}
\toprule
\textbf{Method} & \textbf{Training-free} & \textbf{Model-agnostic} & \textbf{Task-agnostic}$^{\dagger}$ & \textbf{GSM8K} \\
\midrule
DAEDAL          & \cmark & \xmark & \xmark & 63.3 \\
DreamOn         & \xmark & \xmark & \cmark & - \\
FlexMDM        & \xmark & \xmark & \cmark & 67.0 \\
dLLM-Var        & \xmark & \xmark & \xmark &  70.9    \\
\ours{\Framework{}}    & \ours{\cmark} & \ours{\cmark} &  \ours{\cmark} & \ours{\textbf{75.1}} \\
\bottomrule
\end{tabular}
}

\end{table}

We propose \Framework{}, a training-free framework for dynamic variable-length decoding in diffusion LLMs that treats generation length as an explicit inference variable. Rather than fixing the canvas length a priori,
\Framework{} probes candidate lengths and leverages the model’s own generation confidence to search for a more appropriate length. Specifically, it first evaluates a small set of exponentially spaced candidate lengths (e.g., $L\in\{1,2,4,8,\dots\}$) and selects the one maximizing the calibrated confidence as an initial guess. Starting from this initialization, the framework greedily expands or contracts the generation span in a bidirectional manner, guided solely by confidence feedback from the underlying DLLM. The entire procedure operates at inference time and requires no retraining or architectural modification, making it directly applicable to existing diffusion backbones.

A key challenge is comparing confidence across different lengths. Naively selecting the length with the highest average confidence fails due to an intrinsic length-induced bias in DLLMs: confidence drifts approximately logarithmically with canvas length, independent of semantic compatibility. As a result, raw confidence conflates semantic fit with length-dependent uncertainty. We address this by introducing a $\log L$ regularizer that corrects for length bias, yielding a calibrated confidence–length criterion. The regularization coefficient is estimated via lightweight probing, enabling reliable cross-length comparison and robust bidirectional length adjustment during inference.

Experiments show that \Framework{} achieves $51.3\%$ Pass@1 on HumanEval-Infilling with unknown lengths ($+13.4\%$ over DreamOn) and $51.5\%$ average Pass@1 on the four-language McEval benchmark ($+14.3\%$), highlighting its strong generality across models and tasks.

In summary, our key contributions are:
\begin{itemize}
\setlength{\itemsep}{0pt}
\item We identify an intrinsic \emph{length-induced bias} in DLLM confidence estimates, demonstrating that naive confidence comparisons across lengths are unreliable for unknown-length generation.
\item We introduce a length-regularized criterion incorporating a $\log L$ term, providing a principled signal for scoring candidate lengths at inference time.
\item We propose \Framework{}, a training-free inference framework that leverages this criterion via efficient probing and greedy bidirectional adjustment, validating its superiority across diverse benchmarks.
\end{itemize}



\section{Preliminary and Notation}
\subsection{Preliminary}
We study conditional generation problems in which the output must satisfy a hard context constraint,
but the number of tokens to be generated is not specified.
Given an observed context $\mathbf{c}$, the goal is to generate a token span
$\mathbf{y}\in\mathcal{V}^*$ of unknown length and form a completed sequence
$\mathbf{x}=g(\mathbf{c},\mathbf{y})$ that respects the constraint induced by $\mathbf{c}$,
where $g(\cdot,\cdot)$ is a deterministic, task-specific composition function. As a simple example, infilling corresponds to a prefix--suffix context
$\mathbf{c}=(\mathbf{p},\mathbf{s})$ with completion
$\mathbf{x}=\mathbf{p}\Vert\mathbf{y}\Vert\mathbf{s}$.
We denote the (unknown) target span length by $L^\star := |\mathbf{y}|$.

\subsection{Notation}
Table~\ref{tab:notation} summarizes the notation used throughout the paper.

\begin{table}[htbp]
\centering
\small
\caption{Notation used in the paper.}
\setlength{\tabcolsep}{12pt}
\begin{tabular}{@{}ll@{}}
\toprule
Symbol & Description \\
\midrule
$\mathcal{V}$ & Vocabulary \\
$\mathbf{x}=(x_1,\ldots,x_T)$ & Token sequence of length $T$ \\
$|\mathbf{x}|$ & Length of sequence $\mathbf{x}$ \\
$\mathbf{a}\Vert\mathbf{b}$ & Concatenation of two sequences \\
$\langle\mathrm{mask}\rangle$ & Special mask token \\
$\langle\mathrm{mask}\rangle^{L}$ & Mask block of length $L$ \\
$\mathbf{p},\,\mathbf{s}$ & Prefix and suffix context \\
$\mathbf{y}$ & Generated token span \\
$L^\star$ & (Unknown) target span length, $L^\star=|\mathbf{y}|$ \\
\bottomrule
\end{tabular}

\label{tab:notation}
\end{table}

\section{\Framework{}}

\Framework{} is a training-free method for variable-length generation.
Unlike standard DLLM, which assumes a fixed mask length, the target span length under prefix–suffix constraints is unknown.
\Framework{} resolves it by modeling span length as an
\emph{inference-time variable}. 
It relies on a length--confidence signal derived from the model’s uncertainty estimates
to guide length selection.
Using this signal, \Framework{} combines a lightweight probing-based initialization with
inference-time refinement for
the precise length.

\subsection{Overview}
Given a prefix--suffix context $(\mathbf{p},\mathbf{s})$, \Framework{} performs variable-length generation
by explicitly treating the remaining span length as an inference-time decision variable.
Crucially, this requires correcting the intrinsic bias of confidence estimates with respect to length;
without explicit length regularization, confidence scores across different candidate lengths are not
directly comparable, making reliable length selection impossible.

At each step, \Framework{} uses the proposed length--confidence signal to determine
(i) how much content remains to be generated and
(ii) when it is appropriate to commit a new token.
Overall, \Framework{} consists of two stages:
\vspace{-5pt}
\begin{itemize}
\setlength{\itemsep}{0pt}
\item \textbf{Stage I (Initialization by probing).}
A small set of candidate lengths is probed to estimate an instance-specific length regularization
coefficient and to select an initial remaining length that is calibrated to the given context.

\item \textbf{Stage II (Inference-time refinement).}
During generation, the remaining length is locally refined using the length-regularized confidence signal.
A token is committed once the current length becomes locally preferred under this criterion.
This process repeats until the span is fully generated or a global budget is reached.
\end{itemize}

\subsection{Length-Confidence Signal}
\label{sec:conf_length_signal}

To select an appropriate masked span length during inference, we require a scalar signal that allows us
to \emph{compare different candidate lengths under the same prefix--suffix context}.
A natural choice is to measure how confidently the model can fill a masked block of a given length.
In this subsection, we formalize such a confidence measure and show how to correct it for a systematic
length-induced bias.

\paragraph{Span-level confidence from token uncertainty.}
Let $\mathbf{c}=(\mathbf{p},\mathbf{s})$ denote the prefix--suffix context, and let
$q_{\theta}(\cdot \mid \mathbf{x}, \mathbf{c})$ be the token distribution produced by the DLLM
inference procedure for a masked position.
We quantify the confidence at a masked position $i$ using the negative entropy of this distribution:
\begin{equation}
\mathrm{conf}(i; \mathbf{x}, \mathbf{c})
:= \sum_{v\in\mathcal{V}} q_{\theta}\!\left(v \mid \mathbf{x}, \mathbf{c}\right) \quad \log q_{\theta}\!\left(v \mid \mathbf{x}, \mathbf{c}\right).
\end{equation}
Intuitively, a lower-entropy distribution indicates that the model is more certain about the token to be
generated at that position.

Given a length-$L$ masked template
$\mathbf{x}^{(0)}(L)=\mathbf{p}\Vert \langle \mathrm{mask}\rangle^L \Vert \mathbf{s}$,
we aggregate the token-level confidences over the $L$ masked positions and define the span-level confidence
as their average:
\begin{equation}
\mathrm{AVG}_{\mathrm{conf}}(L;\mathbf{p},\mathbf{s})
\;:=\;
\frac{1}{L}\sum_{i\in \mathcal{M}(L)} \mathrm{conf}(i; \mathbf{x}, \mathbf{c}),
\end{equation}
where $\mathcal{M}(L)$ denotes the index set of the masked block.
In practice, all masked token distributions required to compute
$\mathrm{AVG}_{\mathrm{conf}}(L;\mathbf{p},\mathbf{s})$
are obtained from a single standard inference pass of the underlying diffusion model.

\vspace{2pt}
\paragraph{Length-confounded confidence.}
The quantity $\mathrm{AVG}_{\mathrm{conf}}(L;\mathbf{p},\mathbf{s})$ varies with the mask length $L$ for
two distinct reasons.
On the one hand, it reflects whether a span of length $L$ is semantically compatible with the given prefix
and suffix.
On the other hand, it is systematically influenced by the number of masked positions itself, because
changing $L$ alters the uncertainty structure of masked prediction even under an identical context.
To disentangle these effects, we model the observed span-level confidence as
\begin{equation}
\mathrm{AVG}_{\mathrm{conf}}(L;\mathbf{p},\mathbf{s})
\;=\;
S(L;\mathbf{p},\mathbf{s}) \;+\; R(L;\mathbf{p},\mathbf{s}),
\label{eq:conf_decomp}
\end{equation}
where $S(L;\mathbf{p},\mathbf{s})$ captures the context--length compatibility that we seek to evaluate,
and $R(L;\mathbf{p},\mathbf{s})$ represents a systematic shift induced by length.

\vspace{2pt}
\paragraph{Length--confidence signal.}
While $S(L;\mathbf{p},\mathbf{s})$ is not directly observable, the aggregate confidence
$\mathrm{AVG}_{\mathrm{conf}}(L;\mathbf{p},\mathbf{s})$ is.
We therefore estimate and subtract the length-induced bias term $R(L;\mathbf{p},\mathbf{s})$.
As shown in Appendix~\ref{app:logL_bias}, under mild assumptions this bias admits a logarithmic form
$R(L;\mathbf{p},\mathbf{s}) \;=\; k \log L$,
where the coefficient $k$ is instance-dependent and may vary across contexts.
This yields our length--confidence signal:
\begin{equation}
\mathrm{CL}(L;\mathbf{p},\mathbf{s})
\;:=\;
\mathrm{AVG}_{\mathrm{conf}}(L;\mathbf{p},\mathbf{s}) \;-\; k \log L,
\label{eq:cl_signal}
\end{equation}
which removes the systematic length effect from the raw confidence score.
The resulting signal serves as a practical proxy for the semantic compatibility
$S(L;\mathbf{p},\mathbf{s})$ and enables direct comparison between candidate lengths.
In the following subsections, we describe how \Framework{} estimates the coefficient $k$ and selects an
initial length (Stage~I), and how it performs greedy bidirectional length adjustment during inference
(Stage~II).

\subsection{Stage~I: Parameter Estimation and Length Selection}
\label{sec:stage1}
Our length--confidence signal is
$\mathrm{CL}(L;\mathbf{p},\mathbf{s})=\mathrm{AVG}_{\mathrm{conf}}(L;\mathbf{p},\mathbf{s})-k\log L$.
In deployment, the coefficient $k$ is previously unknown and may vary between instances.
Stage~I therefore performs two tasks: First, it estimates an instance-specific $\hat{k}$ with a small number of probes,
and then selects an initialization length $L_{\mathrm{rem}}$ that is already adapted to the given context.
The full procedure is provided in Algorithm~\ref{alg:stage1}.

\paragraph{Exponential probing of candidate lengths.}
Given a maximum probing length $\mathrm{MAX\_LENGTH}$, we probe a compact set of exponentially spaced lengths
$\mathcal{L}_{\exp}=\{1,2,4,\dots,2^m\}$ where $2^m\le \mathrm{MAX\_LENGTH}$.
For  each $L\in\mathcal{L}_{\exp}$, we form the masked template
$\mathbf{x}^{(0)}(L)=\mathbf{p}\Vert \langle \mathrm{mask}\rangle^{L}\Vert \mathbf{s}$
and run a single inference pass to obtain the average confidence
We denote $A(L) := \mathrm{AVG}_{\mathrm{conf}}(L;\mathbf{p},\mathbf{s})$ for brevity.
This exponential schedule keeps the probing cost logarithmic in $\mathrm{MAX\_LENGTH}$ while still covering a wide range of lengths.

\paragraph{Estimating $\hat{k}$ from the long-span regime.}
To estimate the length coefficient, we fit a linear model using the probed generation lengths.
\begin{equation}
A(L)\approx \alpha + k\log L,
\end{equation}
where the fitted slope yields the instance-specific $\hat{k}$.
The intuition is that when the masked span is sufficiently long, the context-dependent component of $A(L)$ changes slowly with $L$,
so the remaining systematic variation is dominated by the length bias $\log L$.

\paragraph{Selecting an initial length.}
With $\hat{k}$, we compute $\mathrm{CL}(L)=A(L)-\hat{k}\log L$ for all $L\in\mathcal{L}_{\exp}$ and choose
\begin{equation}
L_{\mathrm{rem}} \;=\; \arg\max_{L\in\mathcal{L}_{\exp}} \mathrm{CL}(L).
\end{equation}
This produces a strong, context-adaptive initialization, which significantly reduces the refinement cost in Stage~II and improves robustness.

\begin{algorithm}[t]
\caption{Stage I: Estimate $\hat{k}$ and Choose Initial Length via Length-Confidence Signal}
\label{alg:stage1}
\small
\begin{algorithmic}[1]
\REQUIRE Context $(\mathbf{p},\mathbf{s})$, maximum probing length $\text{MAX\_LENGTH}$
\ENSURE Estimated slope $\hat{k}$ and initial remaining length $L_{\mathrm{rem}}$
\STATE $\mathcal{L}_{\mathrm{exp}} \leftarrow \{1,2,4,\ldots,2^m\}$ where $2^m \le \text{MAX\_LENGTH}$
\FOR{$L \in \mathcal{L}_{\mathrm{exp}}$}
    \STATE $\mathbf{x}^{(0)}(L) \leftarrow \mathbf{p}\Vert \langle\mathrm{mask}\rangle^{L}\Vert \mathbf{s}$
    \STATE $A(L) \leftarrow \mathrm{AVG}_{\mathrm{conf}}\!\left(L;\mathbf{p},\mathbf{s}\right)$ \hfill // one inference pass
\ENDFOR
\STATE Fit $A(L) \approx \alpha + k \log L$ and obtain $\hat{k}$
\FOR{$L \in \mathcal{L}_{\mathrm{exp}}$}
    \STATE $\mathrm{CL}(L) \leftarrow A(L) - \hat{k}\log L$
\ENDFOR
\STATE $L_{\mathrm{rem}} \leftarrow \arg\max_{L\in\mathcal{L}_{\mathrm{exp}}}\mathrm{CL}(L)$
\STATE \textbf{return} $\hat{k}, L_{\mathrm{rem}}$
\end{algorithmic}
\end{algorithm}

\subsection{Stage~II: Adjust during Inference}
\label{sec:stage2}

Stage~I provides an instance-specific estimate $\hat{k}$ and an initial remaining length $L_{\mathrm{rem}}$.
Stage~II then performs \emph{inference-time} refinement by treating the remaining length $L_{\mathrm{rem}}$ as a
first-class variable that can be updated as the prefix grows.
At a high level, Stage~II alternates between two operations: (i) adjust $L_{\mathrm{rem}}$ by a small amount if doing so
improves the length--confidence signal under the current context, and (ii) commit one token once the current length is
locally preferred. The complete procedure is given in Algorithm~\ref{alg:stage2}.

\paragraph{Local length refinement.}
Given the current prefix--suffix context $(\mathbf{p},\mathbf{s})$ and remaining length $L_{\mathrm{rem}}$, we consider a
small neighborhood of candidate lengths around the current value (i.e., one-step expansion or contraction, with boundary
handling at $L_{\mathrm{rem}}{=}1$).
We evaluate the length--confidence signal $\mathrm{CL}(\ell;\mathbf{p},\mathbf{s})$ on these candidates and greedily move
$L_{\mathrm{rem}}$ to the best-scoring neighbor if it improves the score.
This local refinement is repeated until no neighboring length yields a higher signal, at which point the current
$L_{\mathrm{rem}}$ is treated as locally stable for the present context.

\begin{algorithm}[t]
\caption{Stage II: Greedy Variable-Length Inference with Local Length Search}
\label{alg:stage2}
\small
\begin{algorithmic}[1]
\REQUIRE Context $(\mathbf{p},\mathbf{s})$, initial remaining length $L_{\mathrm{rem}}$, budget $\text{MAX\_GEN}$,
signal $\mathrm{CL}(L;\mathbf{p},\mathbf{s})=\mathrm{AVG}_{\mathrm{conf}}(L;\mathbf{p},\mathbf{s})-\hat{k}\log L$
\ENSURE Generated span $\hat{\mathbf{y}}$ and completed sequence $\hat{\mathbf{x}}$
\STATE $\hat{\mathbf{y}}\leftarrow \varnothing$, \quad $n_{\mathrm{gen}}\leftarrow 0$, \quad $t\leftarrow 0$
\WHILE{$L_{\mathrm{rem}} > 0$ \AND $n_{\mathrm{gen}} + L_{\mathrm{rem}} \le \text{MAX\_GEN}$}
    \STATE \textit{// (1) Local search: adjust remaining length}
    \REPEAT
        \STATE $\mathcal{N} \leftarrow \{\max(1,L_{\mathrm{rem}}-1),\,L_{\mathrm{rem}},\,L_{\mathrm{rem}}+1\}$
        \STATE $L_{\text{best}}
 \leftarrow \arg\max_{\ell\in\mathcal{N}} \mathrm{CL}(\ell;\mathbf{p},\mathbf{s})$
        \IF{$L_{\text{best}}
 \neq L_{\mathrm{rem}}$ \AND $n_{\mathrm{gen}} + L_{\text{best}}
 \le \text{MAX\_GEN}$}
            \STATE $t\leftarrow t+1$; $L_{\mathrm{rem}} \leftarrow L_{\text{best}}
$
        \ELSE
            \STATE $t\leftarrow t+1$; $L_{\mathrm{rem}} \leftarrow \min\!\left(L_{\text{best}}
,\; \text{MAX\_GEN} - n_{\mathrm{gen}}\right)$ 
        \ENDIF
    \UNTIL{$L_{\mathrm{rem}}$ unchanged}

    \STATE \textit{// (2) Commit one token at the left boundary}
    \STATE $\mathbf{x}^{(0)} \leftarrow \mathbf{p}\Vert \langle\mathrm{mask}\rangle^{L_{\mathrm{rem}}}\Vert \mathbf{s}$
    \STATE Obtain token distributions for the masked positions by one inference step on $\mathbf{x}^{(0)}$
    \STATE $\hat{y}_t \leftarrow \mathrm{Sample}\!\left(P(x_i \mid \mathbf{x}^{(0)}) \quad \text{for the leftmost masked position } i\right)$

    \STATE $\hat{\mathbf{y}}\leftarrow \hat{\mathbf{y}}\Vert \hat{y}_t$; \quad $\mathbf{p}\leftarrow \mathbf{p}\Vert \hat{y}_t$
    \STATE $n_{\mathrm{gen}}\leftarrow n_{\mathrm{gen}}+1$; \quad $L_{\mathrm{rem}}\leftarrow L_{\mathrm{rem}}-1$
\ENDWHILE
\STATE $\hat{\mathbf{x}}\leftarrow \mathbf{p}\Vert\hat{\mathbf{y}}\Vert\mathbf{s}$
\STATE \textbf{return} $\hat{\mathbf{y}}, \hat{\mathbf{x}}$
\end{algorithmic}
\end{algorithm}

\paragraph{Token commitment and problem reduction.}
Once the remaining length is locally stable, we perform one inference pass on the template
$\mathbf{p}\Vert\langle\mathrm{mask}\rangle^{L_{\mathrm{rem}}}\Vert\mathbf{s}$ to obtain token distributions over the
masked block, then materialize a single token at the left boundary of the masked region (i.e., the masked position
adjacent to the prefix).
We append this token to the prefix, record it in the generated span, and decrement the remaining length by one:
\begin{equation}
\mathbf{p} \leftarrow \mathbf{p}\Vert \hat{y}_t, \qquad
L_{\mathrm{rem}} \leftarrow L_{\mathrm{rem}}-1.
\end{equation}
This reduces the original infilling instance to a smaller subproblem with the same suffix $\mathbf{s}$ and an updated
prefix, after which we re-enter the local refinement step.

\paragraph{Termination.}
Stage~II always terminates; we provide a formal justification and complexity analysis in
Appendix~\ref{app:appendix_complexity}.

\section{Experiment}

In this section, we conduct extensive experiments to demonstrate that \Framework{} exhibits strong generalization across models and broad applicability across tasks. Our empirical study is organized around the following research questions: \vspace{-10pt}

\begin{itemize} 
\setlength{\itemsep}{0pt} 
\item \textbf{RQ1:} How does \Framework{} perform on infilling tasks? 
\item \textbf{RQ2:} How well does \Framework{} generalize across different programming languages for infilling? 
\item \textbf{RQ3:} How well does \Framework{} generalize to general-purpose benchmarks beyond infilling?
\item \textbf{RQ4:} What is the impact of the $\log L$ regularization term on the performance of \Framework{}?
\item \textbf{RQ5:} What is the contribution of Stage~II to the overall effectiveness of \Framework{}?
\end{itemize}

\subsection{Experimental Settings}

\textbf{Datasets.}
We evaluate \Framework{} on both code infilling and general-purpose reasoning tasks.
For infilling, we use HumanEval-Infilling~\cite{humaneval_infilling}.
To assess multilingual infill robustness, we additionally adopt McEval~\cite{Mceval}, covering four programming languages: C++, Python, JavaScript, and Rust.
For general-purpose evaluation, we consider HumanEval~\cite{humaneval} for code generation, GSM8K~\cite{GSM8K} and MATH500~\cite{MATH500} for  math reasoning.

\textbf{Models.}
For code infilling, we test representative diffusion-based code models, including LLaDA-8B~\cite{LLADA}, LLaDA-1.5~\cite{LLADA-1.5}, LLaDA-MoE-7B~\cite{LLADA-MOE}, Dream~\cite{Dream}, and DreamCoder~\cite{DreamCoder}.
For general-purpose benchmarks, we evaluate the subset of these models that are trained or tuned for broader instruction-following and reasoning capabilities.

\textbf{Baselines.}
On fill-in-the-middle (FIM) infilling, we compare against two training-based variable-length approaches, DreamOn~\cite{Dreamon2025} and FlexMDM~\cite{FlexMDM}.
For general-purpose tasks, we include DAEDAL~\cite{DAEDAL} as a representative training-free baseline.

\textbf{Metrics.}
For coding-related tasks (including infilling and code generation), we report Pass@1.
For GSM8K and MATH500, we use accuracy as  evaluation metric.

\subsection{Main Results on Code Infilling (RQ1)}
\begin{table*}[t]
\centering
\small
\caption{HumanEval-Infilling (Pass@1). Within each backbone, \textbf{bold} denotes the best score in the corresponding column. \cellcolor{green!10}{Shaded} cells indicate results obtained by \Framework{} on the same backbone. Mean is averaged over Random Span, Single-line, and Multi-line. For DreamOn, we set the initial mask length to $1$ to isolate its variable-length adjustment capability.
}
\setlength{\tabcolsep}{15pt}
\begin{tabular}{llcccc}
\toprule
\multirow{2}{*}{\textbf{Model}} &
\multirow{2}{*}{\textbf{Method}} &
\multicolumn{4}{c}{\textbf{HumanEval-Infilling (Pass@1)}} \\
\cmidrule(lr){3-6}
 &  & \textbf{Random Span} & \textbf{Single-line} & \textbf{Multi-line} & \textbf{Mean} \\
\midrule

\multirow{3}{*}{\textbf{LLaDA-8B}} &
Baseline  & 16.0 & 48.3 & 21.1 & 23.4 \\
& +FlexMDM & -- & 65.0 & --& -- \\
& +\Framework{}    & \ours{\best{19.1}} & \ours{\best{69.4}} & \ours{\best{35.9}} & \ours{\best{36.7}} \\
\addlinespace[4pt]
\cmidrule(lr){1-6}

\multirow{2}{*}{\textbf{LLaDA-1.5}} &
Baseline & 16.5 & 48.8 & 32.2 & 31.2 \\
& +\textbf{\Framework{}}  & \ours{\best{20.5}} & \ours{\best{68.9}} & \ours{\best{34.6}} & \ours{\best{36.1}} \\
\addlinespace[4pt]
\cmidrule(lr){1-6}

\multirow{2}{*}{\textbf{LLaDA-MoE}} &
Baseline & 14.6 & 48.8 & 38.4 & 35.1 \\
& +\textbf{\Framework{}}  & \ours{\best{21.5}} & \ours{\best{71.3}} & \ours{\best{41.4}} & \ours{\best{40.8}} \\
\addlinespace[4pt]
\cmidrule(lr){1-6}

\multirow{2}{*}{\textbf{Dream-7B}} &
Baseline & 8.17 & 48.2 & 21.9 & 22.5 \\
& +\textbf{\Framework{}}  & \ours{\best{19.6}} & \ours{\best{76.7}} & \ours{\best{46.3}} & \ours{\best{44.8}} \\
\addlinespace[4pt]
\cmidrule(lr){1-6}

\multirow{3}{*}{\textbf{DreamCoder-7B}} &
Baseline  & 14.3 & 55.5 & 43.2 & 39.1 \\
& +DreamOn & \best{21.6} & 73.5 & 36.1 & 37.9 \\
& +\textbf{\Framework{}}    & \ours{\best{23.2}} & \ours{\best{81.6}} & \ours{\best{53.8}} & \ours{\best{51.3}} \\
\bottomrule
\end{tabular}
\label{tab:humaneval_infilling}
\end{table*}
Table~\ref{tab:humaneval_infilling} presents the main results on HumanEval-Infilling.
Unless otherwise specified, the \textit{Baseline} infills with a fixed length of 64 \texttt{[MASK]} tokens, whereas all variable-length methods (including \Framework{}, DreamOn, and FlexMDM) are evaluated under the same adjustment budget with \texttt{MAX\_LENGTH}=128, allowing automatic expansion and contraction during inference.
Under this controlled comparison, \Framework{} yields consistent improvements across every backbone, indicating strong model-agnostic generalization.

On average, \Framework{} substantially increases mean Pass@1 across models, despite their very different starting points.
For LLaDA-8B, the mean rises from 23.4 to 36.7, corresponding to a 56.8\% relative improvement; for LLaDA-1.5, it improves from 31.2 to 36.1, a 15.7\% gain.
LLaDA-MoE increases from 35.1 to 40.8, a 16.2\% gain, while the Dream-family models benefit even more: Dream-7B jumps from 22.5 to 44.8, nearly doubling performance with a 99.1\% relative improvement, and DreamCoder-7B increases from 39.1 to 51.3, a 31.2\% gain.
The fact that \Framework{} improves both weaker and stronger backbones suggests that the proposed inference mechanism provides broadly useful inductive bias rather than exploiting model-specific quirks.

The gains concentrate on the most length-sensitive regime, namely Multi-line infilling, where fixed masking frequently leads to under-generation or redundant continuations.
Across models, \Framework{} yields the largest improvements in this setting, with consistent gains also observed for Single-line infilling.
These trends directly validate the core design of \Framework{}: by treating span length as an explicit inference variable and enabling bidirectional length adjustment, the method mitigates both semantic gaps and redundancy when the target length is unknown.

\Framework{} further demonstrates improved reliability over existing variable-length baselines.
On DreamCoder-7B, the training-based DreamOn exhibits inconsistent behavior across span regimes, improving some settings while degrading Multi-line infilling.
In contrast, \Framework{} consistently improves Random Span, Single-line, and Multi-line infilling simultaneously, resulting in the strongest overall performance.
This uniform gain across heterogeneous span lengths indicates more robust length selection, achieved by explicitly correcting length-induced confidence bias rather than relying on heuristic or initialization-dependent length priors.

Overall, these results show that \Framework{} consistently delivers large and robust gains, especially on long and challenging infilling instances. 
Importantly, these improvements are achieved with a modest inference-time overhead: while \Framework{} performs local probing and bidirectional adjustment, the average number of additional forward calls per generated token remains small and bounded across benchmarks (Table~\ref{tab:fc_ratio_stats}). 
This indicates that \Framework{} effectively optimizes the generation span with an approximately constant average cost, rather than incurring unbounded or length-proportional overhead.

\subsection{Cross-Language Generalization for Infilling (RQ2)}
\begin{table*}[t]
\centering
\small
\caption{\textbf{Multilingual code infilling results.} Pass@1 (\%) on Single-line and Multi-line infilling across four programming languages (JavaScript, Python, C++, and Rust). Within each backbone, \textbf{bold} denotes the best score in the corresponding column. \cellcolor{green!10}{Shaded} cells indicate results obtained by applying \Framework{} to the same backbone. \textbf{Mean} is the sample-size-weighted average over the eight language--setting columns. For DreamOn, we set the initial mask length to $1$.}
\setlength{\tabcolsep}{5.2pt}
\begin{tabular}{llccccccccc}
\toprule
\multirow{2}{*}{\textbf{Model}} &
\multirow{2}{*}{\textbf{Method}} &
\multicolumn{2}{c}{\textbf{JavaScript}} &
\multicolumn{2}{c}{\textbf{Python}} &
\multicolumn{2}{c}{\textbf{C++}} &
\multicolumn{2}{c}{\textbf{Rust}} &
\multirow{2}{*}{\textbf{Mean}} \\
\cmidrule(lr){3-4}\cmidrule(lr){5-6}\cmidrule(lr){7-8}\cmidrule(lr){9-10}
 & & \textbf{Single} & \textbf{Multi} & \textbf{Single} & \textbf{Multi} & \textbf{Single} & \textbf{Multi} & \textbf{Single} & \textbf{Multi} & \\
\midrule

\rowcolor{gray!12}
\multicolumn{11}{c}{\textbf{Diffusion Models}} \\

LLaDA-8B     & Baseline             & 62.5 & 16.3 & 68.9 & 10.0 & 57.3 & 23.1 & 51.1 & 7.61 & 37.0 \\
LLaDA-8B     & +\textbf{\Framework{}} &
\ours{\best{68.8}} & \ours{\best{27.5}} & 
\ours{\best{70.0}} & \ours{\best{22.0}} &
\ours{\best{68.3}} & \ours{\best{25.6}} &
\ours{\best{60.9}} & \ours{\best{12.0}} &
\ours{\best{44.2}} \\
\addlinespace[2pt]
\cmidrule(lr){1-11}

LLaDA-MoE    & Baseline             & 62.5 & 12.5 & 84.4 & 11.1 & 61.0 & 31.7 & 57.6 & 7.61 & 41.0 \\
LLaDA-MoE    & +\textbf{\Framework{}} &
\ours{\best{76.3}} & \ours{\best{28.8}} &
\ours{\best{85.6}} & \ours{\best{26.7}} &
\ours{\best{75.6}} & \ours{\best{32.9}} &
\ours{\best{64.1}} & \ours{\best{25.0}} &
\ours{\best{51.8}} \\
\addlinespace[2pt]
\cmidrule(lr){1-11}

Dream-7B     & Baseline             & 53.8 & 17.5 & 66.7 & 8.88 & 61.0 & 13.4 & 39.1 & 8.70 & 35.8 \\
Dream-7B     & +\textbf{\Framework{}} &
\ours{\best{75.0}} & \ours{\best{33.8}} &
\ours{\best{72.2}} & \ours{\best{23.3}} &
\ours{\best{82.9}} & \ours{\best{24.4}} &
\ours{\best{59.8}} & \ours{\best{14.1}} &
\ours{\best{47.9}} \\
\addlinespace[2pt]
\cmidrule(lr){1-11}

DreamCoder-7B& Baseline             & 66.3 & 12.5 & 73.3 & 10.0 & 57.3 & 20.7 & 52.2 & 6.52 & 37.2 \\
DreamCoder-7B& +DreamOn             &
62.5 & 18.8 &
\best{82.2} & 23.3 &
39.1 & 15.9 &
27.2 & 7.60 & 37.2 \\
DreamCoder-7B& +\textbf{\Framework{}} &
\ours{\best{78.8}} & \ours{\best{38.8}} &
\ours{77.8} & \ours{\best{30.0}} &
\ours{\best{76.8}} & \ours{\best{28.0}} &
\ours{\best{62.0}} & \ours{\best{21.7}} &
\ours{\best{51.5}} \\
\addlinespace[4pt]

\rowcolor{gray!12}
\multicolumn{11}{c}{\textbf{Autoregressive (AR) Models}} \\
DeepSeek-Coder-6.7B & Baseline & 64.0 & 40.0 & 66.0 & 32.4 & 60.0 & 33.8 & 54.7 & 33.2 & 47.9 \\

\bottomrule
\end{tabular}
\label{tab:mceval_infilling}
\end{table*}
Table~\ref{tab:mceval_infilling} evaluates cross-language infilling on McEval.
Unless otherwise specified, the \textit{Baseline} fills a fixed span of 32 \texttt{[MASK]} tokens, while all variable-length methods (including \Framework{} and DreamOn) use the same adjustment budget with \texttt{MAX\_LENGTH}=256, enabling automatic expansion and contraction at inference time.
Within this unified setting, \Framework{} consistently improves performance across all four languages and for both Single-line and Multi-line infilling, demonstrating strong cross-language generalization without language-specific tuning.

Averaged over languages and settings, \Framework{} delivers sizable relative gains on every diffusion backbone.
The mean score of LLaDA-8B increases from 37.0 to 44.2, corresponding to a relative improvement of 19.5\%;
LLaDA-MoE rises from 41.0 to 51.8, a 26.3\% gain;
Dream-7B improves from 35.8 to 47.9, yielding a 33.8\% gain;
and DreamCoder-7B jumps from 37.2 to 51.5, which translates to a 38.4\% gain.
Although the absolute baseline performance varies substantially across models, \Framework{} consistently provides strong improvements, indicating that our length-aware inference mechanism is not tied to a particular backbone or training recipe.

The gains are especially pronounced on the Multi-line setting, where fixed masking most frequently causes length mismatch.
For instance, on LLaDA-MoE, Multi-line accuracy more than doubles across multiple languages:
JavaScript increases from 12.5 to 28.8, improving by 130.4\%;
Python rises from 11.1 to 26.7, improving by 140.5\%;
and Rust climbs from 7.61 to 25.0, improving by 228.5\%.
Dream-7B shows the same trend, with Multi-line improving by 93.1\% on JavaScript (17.5 to 33.8) and by 162.4\% on Python (8.88 to 23.3).
These consistent, large relative gains on Multi-line support our central claim: explicitly optimizing span length via bidirectional adjustment is critical for robust infilling when the missing span length is unknown.

Compared with the training-based DreamOn baseline, \Framework{} is also markedly more robust under cross-language shift.
On DreamCoder-7B, DreamOn yields uneven transfer: while it improves Python, it significantly degrades Single-line performance on C++ from 57.3 to 39.1 and on Rust from 52.2 to 27.2, leaving the overall mean unchanged at 37.2.
In contrast, \Framework{} improves DreamCoder-7B consistently across all languages and settings, including large Multi-line gains such as JavaScript from 12.5 to 38.8 (a 210.4\% increase) and Rust from 6.52 to 21.7 (a 232.8\% increase).
This suggests that \Framework{} provides a more stable, model-agnostic way to handle length uncertainty than methods that rely on training-time heuristics.

\subsection{Results on General-Purpose Benchmarks (RQ3)}
\begin{table}[t]
\centering
\small
\setlength{\tabcolsep}{4.5pt}
\renewcommand{\arraystretch}{1.15}
\caption{We compare fixed-length generation with different mask lengths ($L\in\{32,64,128\}$), DAEDAL with initial $L{=}1$, and \Framework{} under the same backbones. ``--'' denotes results not available (the method is not applicable or not reported).}
\label{tab:general_purpose_rq3}
\begin{tabular}{lccccc}
\toprule
\multirow{2}{*}{\textbf{Benchmark}} &
\multicolumn{3}{c}{\textbf{Fixed-Length}} &
\multirow{2}{*}{\textbf{DAEDAL}} & \multirow{2}{*}{\textbf{\Framework{}}} \\
\cmidrule(lr){2-4}
& \textbf{32} & \textbf{64} & \textbf{128}  \\
\midrule

\rowcolor{gray!20}
\multicolumn{6}{c}{\textit{LLADA-8B}} \\

\textbf{GSM8K}
 & 40.2 & 48.0 & 67.9 & 63.3  & \ours{\textbf{75.1}} \\

\textbf{HumanEval}
 & 12.2 & 18.9 & 26.2 & 27.4  & \ours{\textbf{31.7}} \\

\textbf{MATH500}
 & 22.6 & 22.6 & 23.6 & 26.4 & \ours{\textbf{39.8}} \\

\rowcolor{gray!20}
\multicolumn{6}{c}{\textit{LLADA-1.5}} \\

\textbf{GSM8K}
 & 41.2 & 49.4 & 71.0 & 66.5 & \ours{\textbf{76.7}} \\

\textbf{HumanEval}
 & 16.5 & 18.3 & 22.0 & 32.3 & \ours{\textbf{34.8}} \\

\textbf{MATH500}
 & 24.0 & 22.0 & 23.4 & 24.0 & \ours{\textbf{38.6}} \\

\rowcolor{gray!20}
\multicolumn{6}{c}{\textit{Dream-7B}} \\

\textbf{GSM8K}
 & 24.0 & 52.7 & 75.9 & --  & \ours{\textbf{76.6}} \\

\textbf{HumanEval}
 & 12.8 & 25.9 & 48.3 & --  & \ours{\textbf{53.6}} \\

\textbf{MATH500}
 & 14.2 & 24.2 & 38.6  & -- & \ours{\textbf{41.4}} \\

\bottomrule
\end{tabular}
\end{table}
Table~\ref{tab:general_purpose_rq3} reports results on GSM8K, HumanEval, and MATH500.
We compare fixed-length generation with different mask lengths $L\in\{32,64,128\}$, DAEDAL initialized with $L{=}1$, and \Framework{} under the same diffusion backbones.
Overall, \Framework{} consistently improves over fixed-length decoding across tasks and models, and it also surpasses DAEDAL on the LLADA backbones where DAEDAL is applicable.

The most significant gains appear on MATH500.
On LLaDA-8B, \Framework{} raises MATH500 from 26.4 to 39.8, a 50.8\% relative improvement over DAEDAL, and it also improves GSM8K from 63.3 to 75.1 by 18.6\%.
On LLaDA-1.5, \Framework{} boosts MATH500 from 24.0 to 38.6, achieving a 60.8\% relative improvement, while increasing GSM8K from 66.5 to 76.7 by 15.3\%.
These results suggest that explicit length optimization is especially beneficial for math-intensive reasoning, where suboptimal length often manifests as incomplete derivations or redundant steps.

DAEDAL cannot be applied to non-LLADA diffusion backbones, so no DAEDAL results are reported for Dream-7B.
Nevertheless, \Framework{} still delivers striking improvements over the best fixed-length setting on Dream-7B, most notably on HumanEval, where performance jumps from 48.3 at $L{=}128$ to 53.6, a 11.0\% relative gain, and on MATH500, where it increases from 38.6 to 41.4, a 7.3\% relative gain.
Meanwhile, GSM8K remains essentially unchanged, with \Framework{} achieving 76.6 compared with the best fixed-length result of 75.9, indicating that \Framework{} can preserve strong performance on less length-sensitive benchmarks while substantially improving the most length-critical ones.
\begin{figure}[t]
    \centering
    \includegraphics[width=\linewidth]{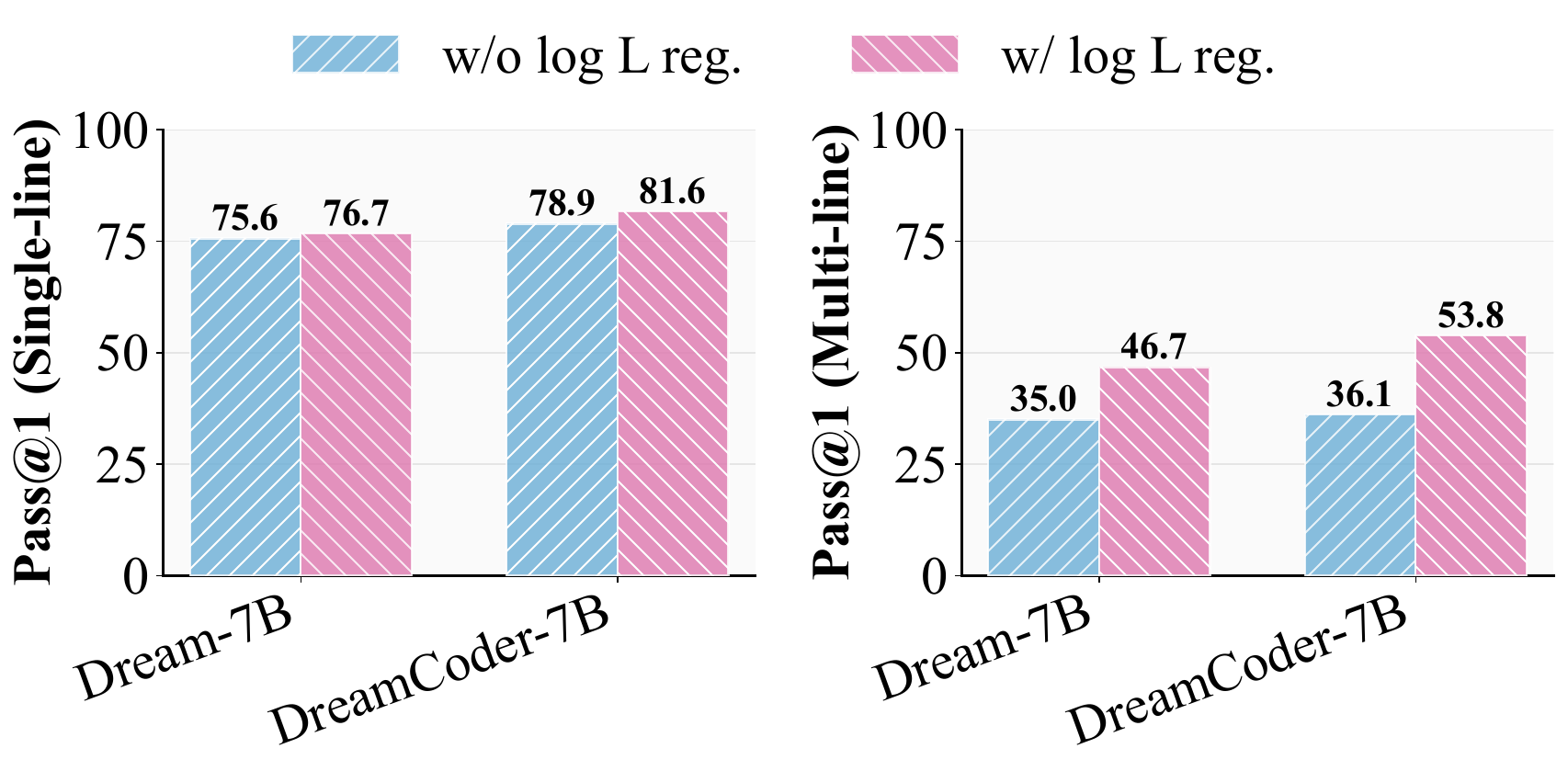}
    \caption{Ablation of the $\log L$ regularization term in \Framework{} on HumanEval-Infilling (Pass@1). All methods use the same length adjustment budget with \texttt{MAX\_LENGTH}=128.}
    \label{fig:ablation_logL_bar}
\end{figure}
\subsection{Ablation on $\log L$ Regularization (RQ4)}
Fig.~\ref{fig:ablation_logL_bar} ablates the $\log L$ regularization term in \Framework{} on HumanEval-Infilling.
Introducing $\log L$ regularization yields consistent gains across both backbones, and the improvements are notably larger in the Multi-line setting, which is more sensitive to length ambiguity.
In contrast, Single-line shows smaller but still positive changes, suggesting that when the target span is short and well constrained, length selection is less of a bottleneck.

Qualitatively, these trends align with the role of the $\log L$ term as an explicit complexity penalty during inference: it discourages unnecessarily long candidate spans, thereby reducing redundant continuation and helping the model commit to a more coherent span that better matches the prefix--suffix context.
This effect becomes most pronounced for Multi-line infilling, where the search space over plausible lengths is widest and naive confidence-based selection is most prone to over-generation.

\subsection{Ablation on Stage~II (RQ5)}
Fig.~\ref{fig:ablation_stage2_bar} ablates Stage~II (mid-span adjustment) in \Framework{} on HumanEval-Infilling.
Across both backbones, enabling Stage~II yields substantial and consistent gains, with the improvements most pronounced on Multi-line, where failures often arise from local inconsistencies inside long spans (e.g., drifting variables, broken control flow, or mismatched intermediate states) rather than only boundary length mismatch.
On Dream-7B, Stage~II increases Single-line Pass@1 from 63.8 to 76.7 (+12.9; +20.2\% relative) and boosts Multi-line from 32.3 to 46.7 (+14.4; +44.6\% relative).
A similar pattern holds for DreamCoder-7B: Single-line rises from 68.3 to 81.6 (+13.3; +19.5\% relative), and Multi-line increases from 35.6 to 53.8 (+18.2; +51.1\% relative).

These trends suggest that Stage~II contributes more than coarse length correction: it actively refines the span interior, fixing mid-span errors that would otherwise persist even when the overall length is plausible.
As a result, \Framework{} achieves markedly better Multi-line coherence, indicating that mid-span refinement is a crucial complement to Stage~I's boundary-driven length adjustment.

\section{Related Work}
\label{sec:related}

\subsection{Variable-Length Generation in Diffusion Large Language Models (DLLMs)}
Unlike autoregressive models~\cite{guo2025deepseek, yang2025qwen3, Bee}, most DLLMs assume a \emph{fixed}-length canvas, which conflicts with variable-length FIM. Current methods are either \textit{training-based} or \textit{training-free}.

\begin{figure}[t]
    \centering
    \includegraphics[width=\linewidth]{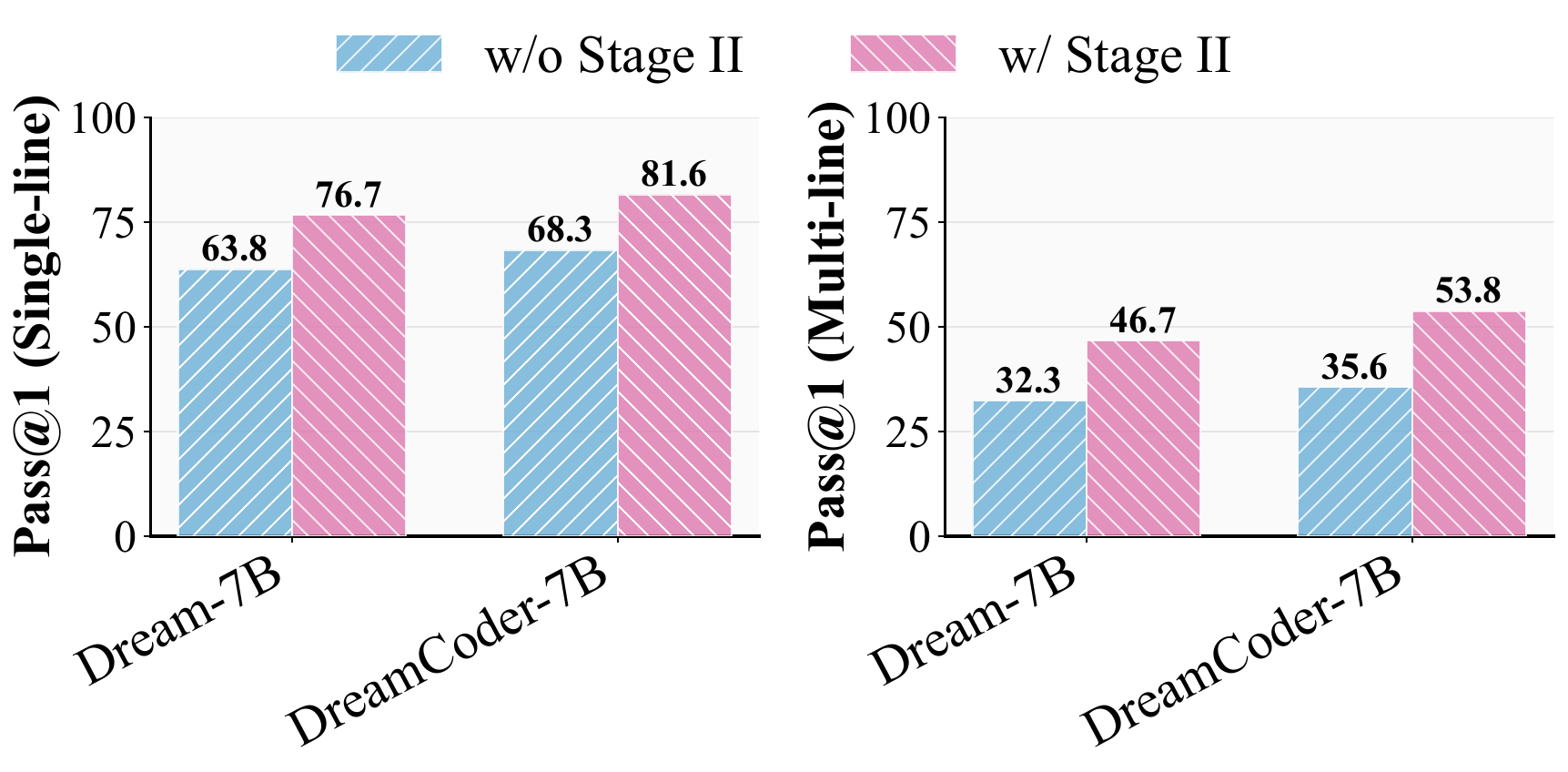}
    \caption{Effect of Stage~II (mid-span adjustment) in \Framework{} on HumanEval-Infilling (Pass@1) for Dream-7B and DreamCoder-7B.}
    \label{fig:ablation_stage2_bar}
\end{figure}

\paragraph{Training-based methods.}
DreamOn trains special edit tokens (e.g., \texttt{<expand>}, \texttt{<delete>}) and applies custom inference rules to
insert or remove masked positions~\cite{Dreamon2025}.
AnyOrder redesigns denoising to jointly predict tokens and inter-token mask counts for flexible length~\cite{FlexMDM}.
Flexible changes token positions during generation to adjust the number of masks, also requiring training-time
modifications~\cite{Flexible}.
dLLM-var is another training-based approach that relies on explicit termination signals (e.g., EOS-style stopping) for
variable-length generation~\cite{dLLM-Var}, which is less convenient for arbitrary-span FIM or settings without a clear
EOS boundary.
Overall, these methods improve flexibility but incur extra training cost and may reduce compatibility with strong
off-the-shelf DLLMs.
In contrast, \Framework{} is training-free and preserves the original DLLM denoising rule while enabling stable
bidirectional length adjustment.

\paragraph{Training-free methods.}
Beyond-Fixed studies training-free variable-length generation via inference-time heuristics for selecting or adjusting
the length canvas~\cite{DAEDAL}. While avoiding retraining, such heuristics can be sensitive to initialization and brittle
under large length variance in FIM.
\Framework{} is also training-free, but treats length as an explicit inference variable and supports both expansion and
contraction, reducing sensitivity to the initial mask length.

\subsection{Diffusion Large Language Models (DLLMs)}
DLLMs conduct iterative denoising with bidirectional context and have recently shown strong performance on general and code generation.
Representative large-scale DLLMs include the LLADA family~\cite{LLADA,LLADA-1.5,LLADA-MOE}, DEER~\cite{cheng2025deer} and Dream~\cite{Dream}. However, most DLLM inference pipelines remain fixed-length, which restricts their applicability to realistic FIM and completion where output length is not given.
Our work complements these DLLMs by enabling variable-length inference without modifying or retraining the underlying model, making existing DLLMs directly usable in unknown-length constrained generation.

\section{Conclusion}
We presented \Framework{}, a training-free framework for unknown-length constrained generation in diffusion LLMs.
By treating length as an explicit inference variable and optimizing a debiased length--confidence signal, \Framework{}
supports bidirectional length adjustment (expansion and contraction) without modifying the base model.
Beyond infilling, we evaluate \Framework{} on general-purpose benchmarks (MATH500, GSM8K, and HumanEval), demonstrating
that it also improves standard left-to-right generation.
\newpage
\section{Impact Statement}
This paper presents work whose goal is to advance the field of machine learning. There are many potential societal consequences of our work, none of which we feel must be specifically highlighted here.

\bibliography{example_paper}
\bibliographystyle{icml2026}

\appendix
\onecolumn
\section{Experimental Setup and Details}

\paragraph{Baseline availability and coverage.}
We compare against representative variable-length diffusion baselines when their public releases allow
reproducible evaluation under our settings.
However, some contemporary methods cannot be uniformly evaluated across backbones due to limitations in
released checkpoints and implementation availability.
\textbf{DreamOn}~\cite{Dreamon2025} is only publicly released with \texttt{DreamCoder} checkpoints, and
training the full system from scratch is computationally expensive (e.g., requiring large-scale multi-GPU
training as reported by the authors); therefore, we report DreamOn comparisons only on the DreamCoder backbone.
\textbf{FlexMDM}~\cite{FlexMDM} does not provide public code at the time of writing; we thus cite the
numbers reported in the original paper, noting that the paper reports results only for a limited subset of
settings.
Finally, \textbf{DAEDAL}~\cite{DAEDAL} is designed around the \texttt{LLaDA} family and is not directly
applicable to \texttt{Dream} under its current framework; hence, we include DAEDAL only in the settings
where it is compatible.
Across all reported comparisons, we align inference budgets and evaluation protocols as closely as possible
to ensure fairness, and we explicitly mark settings that are not applicable or not available.

\paragraph{Decoding and inference configuration.}
Unless otherwise specified, all experiments use the same decoding configuration for fair comparison.
Specifically, we apply standard LLM sampling with nucleus sampling probability $p{=}0.9$,
no top-$k$ truncation, and temperature $0.2$ across all benchmarks.
The same decoding settings are used for \Framework{} and all baseline methods whenever applicable. All reported results are obtained under fixed random seeds and deterministic decoding settings.
We observed negligible variance across repeated runs (<0.5 Pass@1 on HumanEval-Infilling).
\begin{table}[t]
\centering
\small
\setlength{\tabcolsep}{8pt}
\renewcommand{\arraystretch}{1.15}
\caption{
Unified decoding configuration used across benchmarks.
All results for \Framework{} and baselines are obtained under the same settings unless otherwise noted.
}
\label{tab:decoding_config}
\begin{tabular}{lccc}
\toprule
\textbf{Benchmark} & \textbf{Top-$p$} & \textbf{Top-$k$} & \textbf{Temperature} \\
\midrule
HumanEval-Infilling & 0.9 & None & 0.2 \\
McEval              & 0.9 & None & 0.2 \\
MATH500             & 0.9 & None & 0.2 \\
GSM8K               & 0.9 & None & 0.2 \\
HumanEval           & 0.9 & None & 0.2 \\
\bottomrule
\end{tabular}
\end{table}

\section{Run-to-Run Variability Analysis}
Across all models and infilling settings, the run-to-run variability of LR-DLLM remains consistently small. 
As shown in Table~\ref{tab:he_var_lrdllm}, the standard deviation of Pass@1 is below $0.5$ in all cases, 
indicating that the observed performance gains are not driven by stochastic fluctuations across runs. 
This stability holds across heterogeneous backbones and span regimes, suggesting that the proposed 
length-regularized inference yields reliable length decisions rather than exhibiting brittle or oscillatory behavior.

In particular, the magnitude of variability is an order of magnitude smaller than the absolute improvements 
reported in the main results, confirming that the benefits of LR-DLLM are stable rather than within noise.

\begin{table}[t]
\centering
\small
\setlength{\tabcolsep}{6pt}
\renewcommand{\arraystretch}{1.1}
\begin{tabular}{lccc}
\toprule
\textbf{Model} & \textbf{Random Span} ($\pm$ std) & \textbf{Single-line} ($\pm$ std) & \textbf{Multi-line} ($\pm$ std) \\
\midrule
LLaDA-8B        & $19.2\pm0.1$ & $69.4\pm0.3$ & $35.9\pm0.2$ \\
LLaDA-1.5       & $20.5\pm0.2$ &$68.9\pm0.2$ & $34.6\pm0.4$ \\
LLaDA-MoE       & $21.5\pm0.1$ & $71.3\pm0.3$ & $41.4\pm0.4$ \\
Dream-7B        & $19.6\pm0.2$ & $76.7\pm0.4$ & $46.3\pm0.2$ \\
DreamCoder-7B   & $23.2\pm0.2$ & $81.6\pm0.3$ & $53.8\pm0.4$ \\
\bottomrule
\end{tabular}
\caption{\textbf{Run-to-run variability of LR-DLLM on HumanEval-Infilling.}
We report the standard deviation of Pass@1 across $N$ independent runs (different random seeds) for each setting.}
\label{tab:he_var_lrdllm}
\end{table}

\section{Clarification on Task-Agnosticity}
\label{app:task-ag}
In this paper, the term \emph{task-agnostic} refers to the \emph{structural generality} of the proposed
inference framework, rather than empirical validation across all possible task categories.
Specifically, \Framework{} operates on a generic prefix--suffix composition
$\mathbf{x}=\mathbf{p}\Vert\mathbf{y}\Vert\mathbf{s}$ and does not rely on task-specific supervision,
loss functions, or architectural modifications.

In practice, this formulation subsumes two broad and commonly used generation paradigms:
(i) \emph{prefix--suffix infilling} (FIM), where both $\mathbf{p}$ and $\mathbf{s}$ are non-empty, and
(ii) \emph{left-to-right continuation}, which is recovered as the special case $\mathbf{s}=\varnothing$.
Our empirical evaluation focuses on representative tasks within these paradigms, including code
completion, multi-line infilling, and mathematical reasoning, which naturally instantiate the above
structure.

We do not claim that \Framework{} has been empirically validated on all task families
(e.g., dialogue, summarization, or instruction following).
Rather, the task-agnostic claim is intended in the sense that the method is \emph{not specialized to any
particular task}, and can in principle be applied to any generation problem that can be cast into a
prefix--suffix composition with unknown span length.

\section{Fixed-Length Mask Sweep on HumanEval-Infilling}
\label{app:fixed_length_sweep}

\paragraph{Setup.}
We evaluate fixed-length infilling on HumanEval-Infilling using the DreamCoder backbone by sweeping
the mask length $L \in \{4, 8, 16, 32, 64\}$ under identical inference budgets and decoding settings.
Pass@1 accuracy is reported separately for Single-Line and Multi-Line infilling.
Table~\ref{tab:fixed_length_sweep_humaneval} summarizes the numerical results, and
Figure~\ref{fig:humaneval_length_sweep} visualizes the performance trends.

\paragraph{Results.}
As shown in Table~\ref{tab:fixed_length_sweep_humaneval}, fixed-length performance exhibits a strong
dependence on the chosen mask length.
For Single-Line infilling, accuracy peaks at $72.6\%$ with $L{=}16$ and degrades for larger masks
(e.g., $62.4\%$ at $L{=}32$ and $55.5\%$ at $L{=}64$).
For Multi-Line infilling, performance improves with increasing $L$ up to $43.2\%$ at $L{=}64$ but
remains substantially below \Framework{} ($53.8\%$).
Across both settings, \Framework{} consistently outperforms the best fixed-length choice selected
in hindsight (Table~\ref{tab:fixed_length_sweep_humaneval}), indicating that its gains cannot be
attributed solely to increasing $\text{MAX\_LENGTH}$.

While ground-truth span lengths are available for HumanEval-Infilling, we focus on downstream correctness as the primary criterion, since multiple lengths may admit semantically valid solutions. Direct length error metrics (e.g., MAE) are therefore not fully aligned with task success.

\label{app:max_length_tradeoff}
\begin{table}[t]
\centering
\small
\setlength{\tabcolsep}{6pt}
\renewcommand{\arraystretch}{1.15}
\caption{
Comparison between fixed-length infilling with different mask sizes and \Framework{}
on HumanEval-Infilling using the DreamCoder backbone.
We report Pass@1 accuracy (\%).
}
\label{tab:fixed_length_sweep_humaneval}
\begin{tabular}{lcccccc}
\toprule
\textbf{Benchmark} &
\multicolumn{5}{c}{\textbf{Fixed Mask Length}} &
\multirow{2}{*}{\textbf{\Framework{}}} \\
\cmidrule(lr){2-6}
 & 4 & 8 & 16 & 32 & 64 &  \\
\midrule
Single-Line & 24.9 &61.2& 72.6& 62.4& 55.5 & \ours{\textbf{81.6}} \\
Multi-Line  & 5.5& 14.7& 27.1& 39.4& 43.2 & \ours{\textbf{53.8}} \\
\bottomrule
\end{tabular}
\end{table}

\begin{figure}[t]
    \centering
    \begin{subfigure}{0.48\linewidth}
        \centering
        \includegraphics[width=\linewidth]{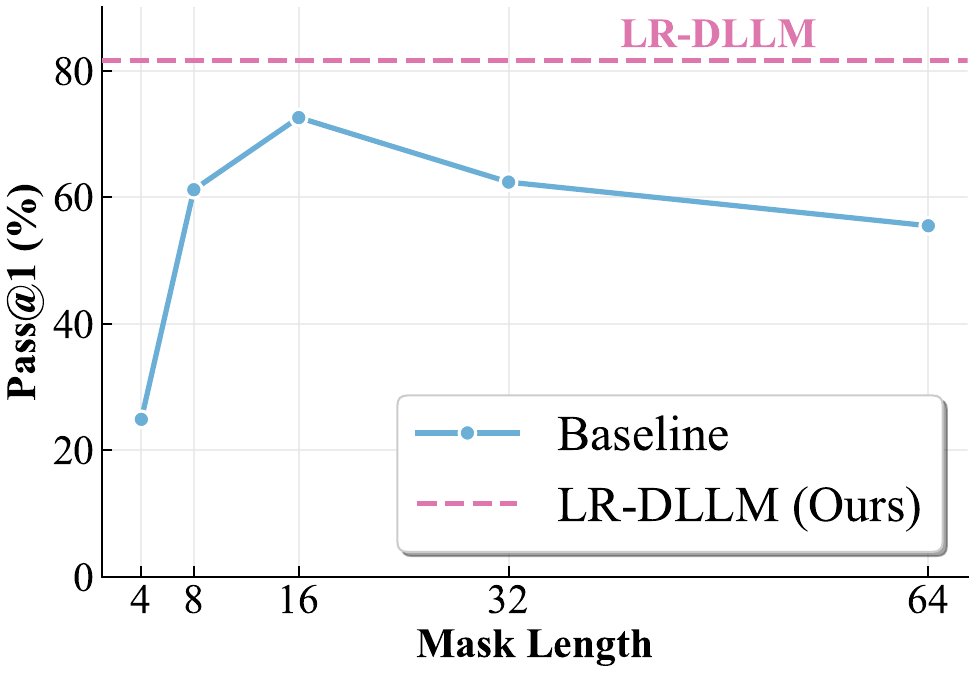}
        \caption{Single-Line Infilling}
        \label{fig:humaneval_single}
    \end{subfigure}
    \hfill
    \begin{subfigure}{0.48\linewidth}
        \centering
        \includegraphics[width=\linewidth]{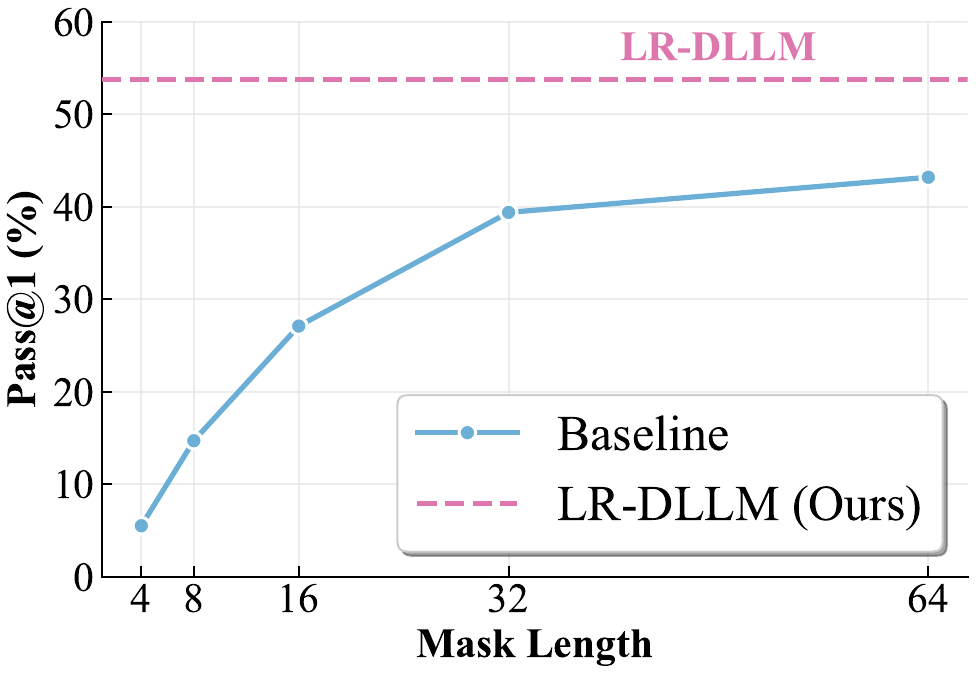}
        \caption{Multi-Line Infilling}
        \label{fig:humaneval_multi}
    \end{subfigure}

    \caption{
    Effect of fixed mask length on HumanEval-Infilling.
    Solid lines denote fixed-length infilling with different mask sizes,
    while the dashed horizontal line corresponds to \Framework{}.
    }
    \label{fig:humaneval_length_sweep}
\end{figure}

\section{Length-Induced Entropy in FIM and Continuation}
\label{app:logL_bias}
In both fill-in-the-middle (FIM) and standard left-to-right continuation, we consistently observe a systematic trend:
as the ``free'' generation region becomes longer, token predictions become less certain, and uncertainty tends to increase
as tokens move farther away from the anchoring context (i.e., the boundary between conditioned and unconditioned spans).
This phenomenon suggests that measured uncertainty contains an intrinsic length-dependent component in addition to the
model's quality-dependent component.

\paragraph{A statistical-physics view.}
We model this effect using the classical relation between entropy and the number of admissible microstates:
\begin{equation}
S = k \log N,
\label{eq:boltzmann}
\end{equation}
where $N$ denotes the number of feasible configurations under macroscopic constraints and $k$ is a constant scale factor.
In language generation, macroscopic constraints are induced by the given context (prefix/suffix, syntax, semantics), while
microstates correspond to distinct valid token (or sequence) realizations consistent with those constraints.

\paragraph{Assumption: admissible choices increase with distance from the anchor.}
Consider a contiguous free region of length $L$.
Index positions by their distance $d \in \{1,2,\dots,L\}$ from an anchoring point.\footnote{For FIM, one may interpret $d$
as the distance to the nearest boundary (prefix or suffix). The analysis below focuses on the dependence on $L$ and thus
applies qualitatively to both settings.}
We assume the effective number of admissible choices at distance $d$ grows as a power law:
\begin{equation}
\#\text{choices}(d) \propto d^{r}, \qquad r>0,
\label{eq:choices}
\end{equation}
capturing the intuition that contextual constraints weaken with distance.

\paragraph{Total number of configurations.}
Using a multiplicative approximation across positions, the number of feasible configurations for a length-$L$ free region
scales as
\begin{equation}
N(L) \propto \prod_{d=1}^{L} d^{r} = (L!)^{r}.
\label{eq:NL}
\end{equation}

\paragraph{Length-induced entropy and its $\log L$ growth.}
Substituting Eq.~\eqref{eq:NL} into Eq.~\eqref{eq:boltzmann}, the entropy induced purely by the growth of admissible
configurations with $L$ is
\begin{equation}
S_{2}(L) = k \log N(L) \approx k \log (L!)^{r} = kr \log(L!).
\label{eq:S2_exact}
\end{equation}
Applying Stirling's approximation,
\begin{equation}
\log(L!) = L\log L - L + \mathcal{O}(\log L),
\label{eq:stirling}
\end{equation}
we obtain the leading-order behavior
\begin{equation}
S_{2}(L) \approx kr\,(L\log L - L) \sim kr\,L\log L.
\label{eq:S2_asymp}
\end{equation}
Define the average (per-token) length-induced entropy:
\begin{equation}
\overline{S_{2}}(L) := \frac{S_{2}(L)}{L}
\approx kr\log L - kr + \mathcal{O}\!\left(\frac{\log L}{L}\right).
\label{eq:S2_avg}
\end{equation}
Hence, up to an additive constant,
\begin{equation}
\overline{S_{2}}(L) \approx a \log L + b,
\qquad \text{with } a \approx kr.
\label{eq:S2_fit}
\end{equation}
This predicts that even with a fixed model, average predictive uncertainty should increase approximately linearly in
$\log L$ as the free region grows.

\paragraph{Decomposition: length effect vs.\ model quality.}
Let $\mathrm{AVG\_S}(L)$ denote a measurable average uncertainty statistic (e.g., average token entropy computed from
model logits) at free length $L$. We decompose it as
\begin{equation}
\mathrm{AVG\_S}(L) = S_{1}(L) + \overline{S_{2}}(L),
\label{eq:decompose}
\end{equation}
where $\overline{S_{2}}(L)$ captures the inevitable length-induced contribution characterized above, and $S_{1}(L)$ is a
quality-related term reflecting the model's effective certainty under the current constraints.

\paragraph{Practical estimation via log-length fitting.}
Although $S_{1}(L)$ is not directly observable, $\mathrm{AVG\_S}(L)$ is measurable and $\overline{S_{2}}(L)$ admits the
log-length form in Eq.~\eqref{eq:S2_fit}. We therefore fit
\begin{equation}
\widehat{\overline{S_{2}}}(L) = a\log L + b,
\label{eq:fit_model}
\end{equation}
by regressing measured $\mathrm{AVG\_S}(L)$ over a small set of geometrically spaced lengths, e.g.,
\begin{equation}
L \in \{\mathrm{MAX},\, \mathrm{MAX}/2,\, \mathrm{MAX}/4,\, \dots\}.
\label{eq:geometric_lengths}
\end{equation}
We then estimate the quality-related component as
\begin{equation}
\widehat{S_{1}}(L) = \mathrm{AVG\_S}(L) - \widehat{\overline{S_{2}}}(L).
\label{eq:S1_est}
\end{equation}

\paragraph{Interpretation.}
The estimate $\widehat{S_{1}}(L)$ serves as a length-normalized uncertainty signal: it subtracts the predictable entropy
growth attributable to increased free length, making uncertainty more comparable across different candidate lengths in
both FIM and continuation settings.

\section{Forward-Pass Complexity of \Framework{}}
\label{app:appendix_complexity}

We analyze the complexity of \Framework{} in terms of the number of model forward evaluations.
We count one \emph{forward evaluation} as a single invocation of the DLLM inference procedure on a template
$\mathbf{x}^{(0)}(L)=\mathbf{p}\Vert \langle \mathrm{mask}\rangle^{L}\Vert \mathbf{s}$, which returns
$\mathrm{AVG}_{\mathrm{conf}}(L;\mathbf{p},\mathbf{s})$ (and the associated token distributions).

\paragraph{Stage~I (exponential probing).}
Stage~I evaluates $\mathrm{AVG}_{\mathrm{conf}}(L;\mathbf{p},\mathbf{s})$ on the exponentially spaced set
$\mathcal{L}_{\exp}=\{1,2,4,\dots,2^m\}$ where $2^m \le L_{\max}$.
Thus Stage~I uses
\begin{equation}
F_{\text{I}} = |\mathcal{L}_{\exp}| = \lfloor \log_2 L_{\max}\rfloor + 1 = O(\log L_{\max})
\end{equation}
forward evaluations.

\paragraph{Stage~II (greedy local search).}
At each commitment step $t\in\{1,\dots,N\}$, the algorithm performs greedy hill-climbing over lengths by comparing
$\mathrm{CL}$ on neighboring lengths $\{L-1,L,L+1\}$, and then commits one token once a strict local maximum is reached.
Let $u_t$ denote the number of length updates (moves) performed before the $t$-th commitment.

\begin{lemma}[Monotonicity of the length search]
\label{lem:mono_search}
Within a fixed commitment step (i.e., with $\mathbf{p},\mathbf{s}$ held fixed), the greedy update rule never revisits a
previous length. Hence, the length trajectory is monotone and terminates at a local maximum.
\end{lemma}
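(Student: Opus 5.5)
The argument rests on the observation that, within a fixed commitment step, the score $f(\ell):=\mathrm{CL}(\ell;\mathbf{p},\mathbf{s})$ is a fixed deterministic function of $\ell$. I take $\mathrm{AVG}_{\mathrm{conf}}$ to be computed from a single deterministic forward pass on $\mathbf{p}\Vert\langle\mathrm{mask}\rangle^{\ell}\Vert\mathbf{s}$ with $\hat{k}$ frozen after Stage~I. The inner \textbf{repeat} loop then moves $L_{\mathrm{rem}}$ only to $L_{\text{best}}=\arg\max_{\ell\in\mathcal{N}} f(\ell)$, and only when $L_{\text{best}}\neq L_{\mathrm{rem}}$. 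I fix a tie-breaking convention in the $\arg\max$ that prefers the current length, so a move happens only when the score strictly improves. The key invariant is therefore that every accepted move strictly increases $f$: if the trajectory is $\ell_0,\ell_1,\ell_2,\dots$, then $f(\ell_0)<f(\ell_1)<f(\ell_2)<\cdots$.

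\textbf{Step 1 (no revisits).} Suppose $\ell_i=\ell_j$ with $i<j$. Strict monotonicity gives $f(\ell_i)<f(\ell_j)=f(\ell_i)$, a contradiction, so no length is ever revisited.

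\textbf{Step 2 (monotone direction).} Each move changes the length by exactly $\pm1$. Suppose the trajectory reversed direction, say $\ell_{i+1}=\ell_i+1$ and then $\ell_{i+2}=\ell_{i+1}-1$. Then $\ell_{i+2}=\ell_i$, which is a revisit and contradicts Step 1. The symmetric case is identical. The clamp $\max(1,\cdot)$ at $L_{\mathrm{rem}}=1$ only collapses $\mathcal{N}$ to $\{1,2\}$ and never creates a new kind of move.

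\textbf{Step 3 (termination).} By Step 2 the trajectory moves in one direction on the integers. It is bounded below by $1$ and above by $\mathrm{MAX\_GEN}-n_{\mathrm{gen}}$, because moves violating the budget are rejected and the else-branch clamps via $\min(\cdot,\mathrm{MAX\_GEN}-n_{\mathrm{gen}})$. A one-directional walk on a finite interval with distinct values must stop after at most $\mathrm{MAX\_GEN}-n_{\mathrm{gen}}$ moves. At the stopping point no neighbor in $\mathcal{N}$ strictly beats the current length, so the final length is a local maximum of $f$ over its admissible neighborhood, possibly a budget-constrained one. This also gives the bound $u_t\le \mathrm{MAX\_GEN}$ used in the complexity count.

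\textbf{Main obstacle.} The difficult part is the bookkeeping around the else-branch, where $L_{\mathrm{rem}}\leftarrow\min(L_{\text{best}},\mathrm{MAX\_GEN}-n_{\mathrm{gen}})$. This assignment can change $L_{\mathrm{rem}}$ without a strict improvement in $f$. My first approach is to show it fires at most once: after it fires, $L_{\mathrm{rem}}$ equals the budget cap, every further upward move is infeasible, and every downward move would again require a strict increase. I would then treat it as a terminal clamp that cannot be followed by a reversal. The other subtlety is ties in the $\arg\max$. These must be resolved in favor of the current length, or by any fixed total order combined with strict improvement, because otherwise equal scores could make the search oscillate. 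I would state this convention explicitly as part of the algorithm's specification.
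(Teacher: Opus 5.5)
Your argument is the same as the paper's. A move is accepted only on a strict increase of $\mathrm{CL}$, so returning to the previous length would need the reverse strict inequality. With $\pm1$ steps this gives a one-directional, revisit-free walk on a bounded range.

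Keep your explicit tie-breaking convention and your assumption that $\mathrm{CL}$ is deterministic within a step. The paper's ``by definition $\mathrm{CL}(L+1)>\mathrm{CL}(L)$'' silently assumes both. Under arbitrary tie-breaking, $\mathrm{CL}(L)=\mathrm{CL}(L+1)$ can make the search oscillate.

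Two corrections concern the budget part.

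\textbf{The else-branch is not an obstacle.} The while-guard gives $n_{\mathrm{gen}}+L_{\mathrm{rem}}\le\mathrm{MAX\_GEN}$, the if-branch preserves this, and a downward $L_{\text{best}}$ is always feasible. So the else-branch runs in only two cases: $L_{\text{best}}=L_{\mathrm{rem}}$, or $L_{\text{best}}=L_{\mathrm{rem}}+1$ with $L_{\mathrm{rem}}=\mathrm{MAX\_GEN}-n_{\mathrm{gen}}$. In both, $\min(L_{\text{best}},\mathrm{MAX\_GEN}-n_{\mathrm{gen}})=L_{\mathrm{rem}}$. It never changes $L_{\mathrm{rem}}$ and is simply the terminating iteration, so no ``fires at most once'' argument is needed.

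\textbf{Your stopping-point claim fails at the cap.} You assert that at the stopping point no neighbor in $\mathcal{N}$ strictly beats the current length. In the budget case the loop halts precisely because the infeasible $L_{\mathrm{rem}}+1$ does beat it. Worse, the step can start at the cap, for example right after a commit that followed a step ending at the cap, since $L_{\mathrm{rem}}$ and $\mathrm{MAX\_GEN}-n_{\mathrm{gen}}$ both drop by one. Then nothing excludes $\mathrm{CL}(L_{\mathrm{rem}}+1)>\mathrm{CL}(L_{\mathrm{rem}}-1)>\mathrm{CL}(L_{\mathrm{rem}})$. In that case the feasible neighbor $L_{\mathrm{rem}}-1$ beats the endpoint, so it is not a budget-constrained local maximum either.

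What you can prove is that the search halts either at a local maximum of $\mathrm{CL}$ over $\mathcal{N}$ or at the budget cap. The paper's statement has the same blind spot, since it ignores the budget. A true constrained local maximum would require taking the $\arg\max$ over feasible neighbors only.
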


\begin{proof}[Proof sketch]
Suppose the algorithm moves from $L$ to $L+1$. By definition, $\mathrm{CL}(L+1)>\mathrm{CL}(L)$.
At the next iteration (still under the same context), $L$ is a neighbor of $L+1$, so moving back would require
$\mathrm{CL}(L)>\mathrm{CL}(L+1)$, contradicting the strict improvement. The same argument holds for moves to $L-1$.
\end{proof}

\paragraph{Counting forward evaluations in Stage~II.}
At the beginning of each commitment step, we evaluate $\mathrm{CL}$ at $\{L-1,L,L+1\}$, requiring at most $3$ forwards.
By Lemma~\ref{lem:mono_search}, each move changes $L$ by $\pm 1$ and introduces at most one new boundary length whose
$\mathrm{CL}$ has not been evaluated yet; therefore each additional move costs at most one forward evaluation.
Moreover, in our implementation the inference outputs at the locally optimal length are reused to commit the token, so
no extra forward is needed for commitment.\footnote{If an additional forward is used to commit, it only changes constants.}
Therefore, the total number of forward evaluations in Stage~II satisfies
\begin{equation}
F_{\text{II}} \;\le\; \sum_{t=1}^{N} (3 + u_t)
\;=\; 3N + U,
\qquad\text{where } U:=\sum_{t=1}^{N} u_t.
\label{eq:F2_bound}
\end{equation}

\paragraph{Worst-case upper bound.}
Since the length is always within $[1,L_{\max}]$, we have $u_t \le L_{\max}-1$ and hence $U \le N(L_{\max}-1)$.
Plugging into Eq.~\eqref{eq:F2_bound} yields
\begin{equation}
F_{\text{II}} = O(NL_{\max}).
\end{equation}
When $L_{\max}=O(N)$, this gives the conservative worst-case bound $F_{\text{II}}=O(N^2)$.

\paragraph{A tighter bound under a stability assumption.}
Let $L_t^\star$ denote the (strict) local maximizer reached at commitment step $t$ under the corresponding context.
Since we always initialize step $t$ from the previous locally optimal length after committing one token,
we have $L^{\text{init}}_{t} = L_{t-1}^\star - 1$ for $t\ge 2$.
By Lemma~\ref{lem:mono_search}, the number of moves is exactly the distance to the new local optimum:
\begin{equation}
u_t = \big|L_t^\star - L_t^{\text{init}}\big|
= \big|L_t^\star - (L_{t-1}^\star - 1)\big|
\le 1 + |L_t^\star - L_{t-1}^\star|.
\end{equation}
Summing over $t$ gives
\begin{equation}
U \le N + \sum_{t=2}^{N} |L_t^\star - L_{t-1}^\star|.
\label{eq:U_tv}
\end{equation}

\begin{assumption}[Bounded drift of locally optimal lengths]
\label{ass:drift}
The locally optimal length does not change abruptly across successive subproblems, i.e.,
$\sum_{t=2}^{N} |L_t^\star - L_{t-1}^\star| = O(NH)$ for some $H$.
\end{assumption}

Under Assumption~\ref{ass:drift}, Eq.~\eqref{eq:U_tv} implies $U=O(NH)$, and therefore Eq.~\eqref{eq:F2_bound} yields
\begin{equation}
F_{\text{II}} = O(NH).
\end{equation}
In particular, if the drift satisfies $H=O(\log L_{\max})$ (or if $|L_t^\star-L_{t-1}^\star| \le c\log L_{\max}$
uniformly), then
\begin{equation}
F_{\text{II}} = O(N\log L_{\max}).
\end{equation}

\paragraph{Overall bound.}
Combining Stage~I and Stage~II, the total number of forward evaluations is
\begin{equation}
F_{\text{total}} = F_{\text{I}} + F_{\text{II}}
= O(\log L_{\max}) + O(NL_{\max})
\end{equation}
in the worst case, and
\begin{equation}
F_{\text{total}} = O(\log L_{\max}) + O(NH)
\end{equation}
under Assumption~\ref{ass:drift}. In practice, we observe that $H$ is small, so the empirical complexity is close to linear.

\section{Forward-Call Efficiency Distribution Across Models}
\label{sec:forward_call_ratio}

We quantify the inference-time overhead of \Framework{} using the ratio
$\texttt{Forward Calls}/\texttt{Generated Tokens}$, where \texttt{Forward Calls} counts the number of model invocations
used to evaluate $\mathrm{AVG}_{\mathrm{conf}}(L)$ (and thus $\mathrm{CL}(L)$) during Stage~I probing and Stage~II local
length search, and \texttt{Generated Tokens} is the number of committed tokens produced by Stage~II.
Figure~\ref{fig:fc_ratio_three_models} visualizes the per-instance distribution of this ratio for three backbones
(\textbf{Dream}, \textbf{DreamCoder}, and \textbf{LLaDA-8B}) under an identical plotting protocol, while
Table~\ref{tab:fc_ratio_stats} reports summary statistics.

Across all three models, the distributions are markedly right-skewed: the median is consistently lower than the mean
(e.g., Dream: 3.43 vs.\ 3.66; DreamCoder: 4.00 vs.\ 4.89; LLaDA-8B: 4.63 vs.\ 5.12), indicating that most instances incur
a stable and moderate overhead, while a small number of hard cases forms a long tail and inflates the mean.
Using the median as a robust measure of typical behavior, Dream is the most efficient backbone (median 3.43), followed by
DreamCoder (median 4.00) and LLaDA-8B (median 4.63). The tails differ more substantially: Dream exhibits the tightest
distribution (P95 = 5.00), whereas DreamCoder shows the heaviest tail (P95 = 11.23), with LLaDA-8B in between (P95 = 8.00).
All three models have rare extreme outliers (max around 28--33), which are far beyond their respective P95 values, suggesting
that they occur infrequently but can arise when many local length updates are triggered for instances with short effective
generations or unstable local optima.

Overall, these results corroborate that, despite a conservative worst-case bound, \Framework{} typically converges in only a
few local updates per committed token on standard benchmarks, leading to a modest overhead for the vast majority of instances.

\begin{table}[t]
\centering
\small
\setlength{\tabcolsep}{6.0pt}
\renewcommand{\arraystretch}{1.15}
\caption{Summary statistics of the per-instance ratio $\texttt{Forward Calls}/\texttt{Generated Tokens}$ computed from
generation logs (lower is better). Here \texttt{Forward Calls} counts model invocations used to evaluate
$\mathrm{AVG}_{\mathrm{conf}}(L)$ (hence $\mathrm{CL}(L)$) during Stage~I and Stage~II, and \texttt{Generated Tokens} is the
number of committed tokens.}
\label{tab:fc_ratio_stats}
\begin{tabular}{lcccccc}
\toprule
\textbf{Model} & \textbf{n} & \textbf{Mean} & \textbf{Median} & \textbf{P95} & \textbf{Min} & \textbf{Max} \\
\midrule
Dream       & 1033 & 3.6574 & 3.4286 & 5.0000  & 2.0625 & 32.8333 \\
DreamCoder  & 1033 & 4.8910 & 4.0000 & 11.2268 & 2.8594 & 27.6667 \\
LLaDA-8B    & 1033 & 5.1211 & 4.6316 & 8.0000  & 2.7288 & 31.1429 \\
\bottomrule
\end{tabular}
\vspace{-4pt}
\end{table}

\begin{figure*}[t]
    \centering
    \begin{subfigure}[t]{0.32\textwidth}
        \centering
        \includegraphics[width=\linewidth]{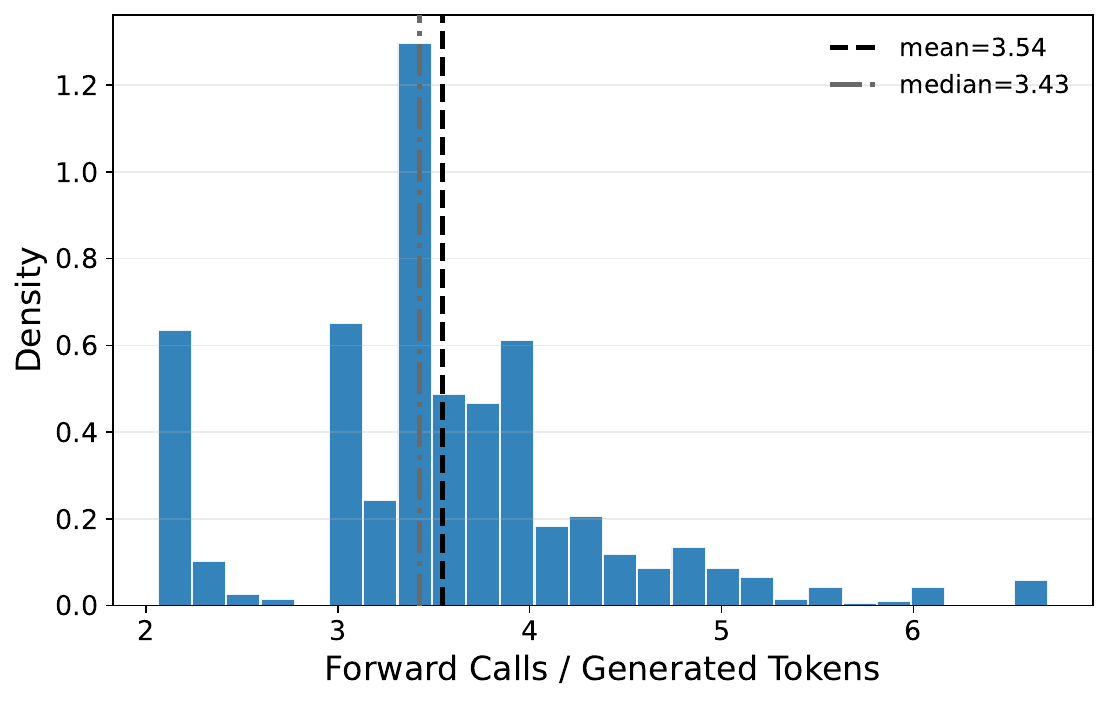}
        \caption{\textbf{Dream}.}
        \label{fig:fc_ratio_dream}
    \end{subfigure}
    \hfill
    \begin{subfigure}[t]{0.32\textwidth}
        \centering
        \includegraphics[width=\linewidth]{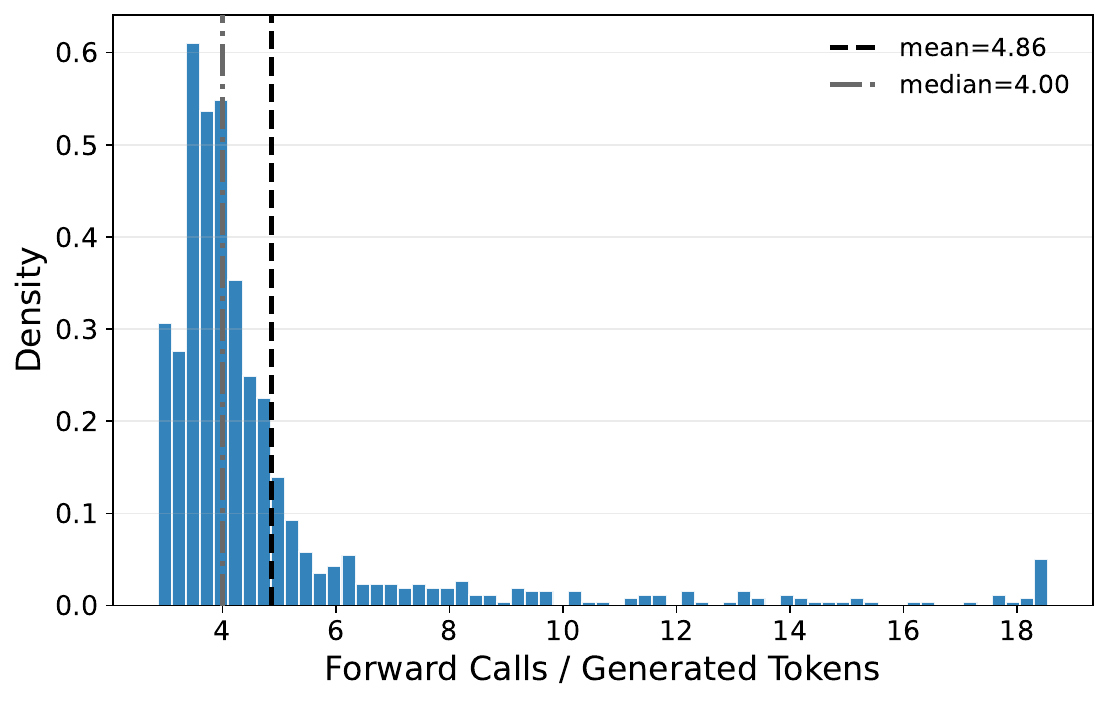}
        \caption{\textbf{DreamCoder}.}
        \label{fig:fc_ratio_dreamcoder}
    \end{subfigure}
    \hfill
    \begin{subfigure}[t]{0.32\textwidth}
        \centering
        \includegraphics[width=\linewidth]{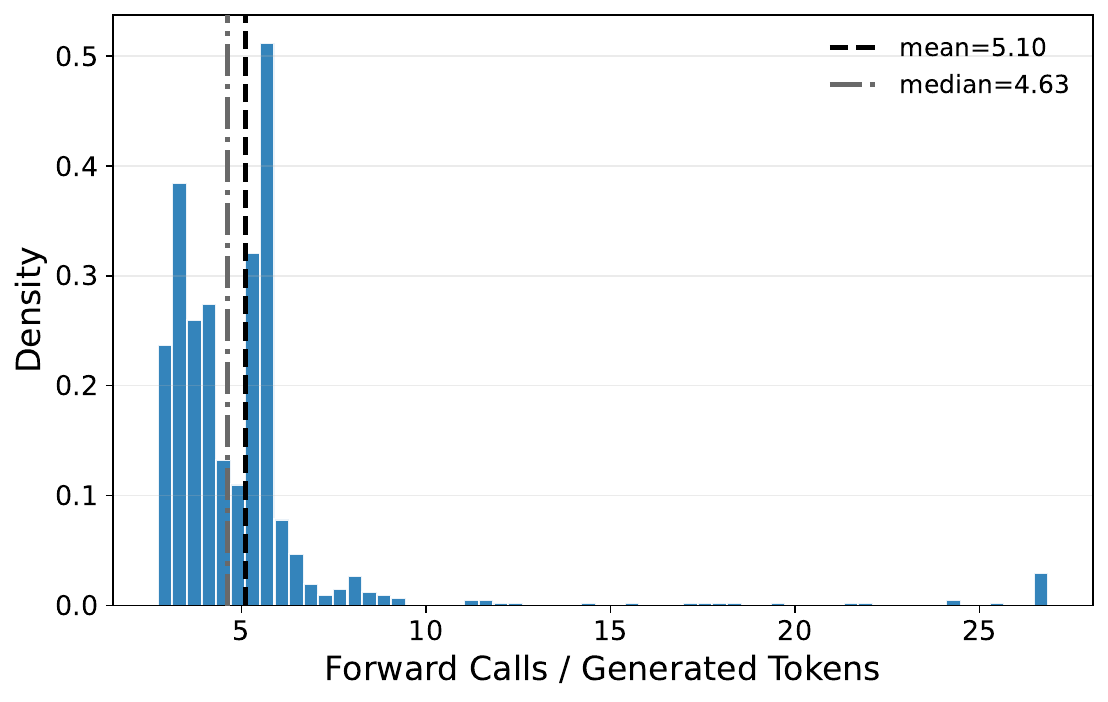}
        \caption{\textbf{LLaDA-8B}.}
        \label{fig:fc_ratio_llada8b}
    \end{subfigure}

    \vspace{-4pt}
    \caption{Per-instance distributions of the overhead ratio $\texttt{Forward Calls}/\texttt{Generated Tokens}$ for three
    backbones under the same inference and logging protocol. \texttt{Forward Calls} counts model invocations used to evaluate
    $\mathrm{AVG}_{\mathrm{conf}}(L)$ (and thus $\mathrm{CL}(L)$) during Stage~I probing and Stage~II greedy local length
    search, and \texttt{Generated Tokens} is the number of committed tokens produced by Stage~II. The right-skewed shapes
    indicate a stable typical overhead with a small number of hard outliers.}
    \label{fig:fc_ratio_three_models}
    \vspace{-8pt}
\end{figure*}

\section{Additional Comparison with dLLM-Var}
\label{app:dllmvar}

We include an auxiliary comparison with dLLM-Var, a representative \emph{training-based} variable-length method for
diffusion LLMs. We do \emph{not} report it in the main tables because its current public release is limited to a single
trained checkpoint (\texttt{LLADA-8B-Instruct}) and a specific inference variant (block diffusion), and it only supports
one-sided (backward-only) length growth. These constraints prevent a consistent evaluation across the multiple backbones
and general prefix--suffix FIM settings considered in this paper. Nevertheless, for completeness, we report results on
the subset of benchmarks that dLLM-Var can be evaluated on. For a fair comparison, we use the same maximum probing length
$\texttt{MAX\_LENGTH}=256$ for \Framework{} in all experiments below.

\begin{table}[t]
\centering
\small
\setlength{\tabcolsep}{5pt}
\renewcommand{\arraystretch}{1.15}
\caption{
Comparison with dLLM-Var on the \emph{restricted} subset of settings supported by its public release.
dLLM-Var is only available as a trained \texttt{LLADA-8B-Instruct} checkpoint with block-diffusion inference and
backward-only length growth; therefore it cannot be applied to other diffusion backbones or general prefix--suffix FIM
requiring bidirectional length adjustment. We report \Framework{} on the same backbone under training-free inference.
}
\label{tab:dllmvar}
\begin{tabular}{lcc}
\toprule
\textbf{Benchmark} & \textbf{dLLM-Var} & \textbf{\Framework{}} \\
\midrule
GSM8K (Acc.)          & 70.9  & 75.1 \\
MATH500 (Acc.)        & 18.4  & 39.8 \\
HumanEval (Pass@1)    & 17.7  & 31.7 \\
\bottomrule
\end{tabular}
\end{table}

\paragraph{Discussion.}
Table~\ref{tab:dllmvar} shows that \Framework{} consistently outperforms dLLM-Var on the supported subset.
Concretely, \Framework{} improves GSM8K accuracy from $70.9$ to $75.1$ (+4.2 points),
more than doubles performance on MATH500 from $18.4$ to $39.8$ (+21.4 points),
and boosts HumanEval Pass@1 from $17.7$ to $31.7$ (+14.0 points).
These gains are achieved while remaining training-free and model-agnostic, whereas dLLM-Var is tied to a specific trained
\texttt{LLADA-8B-Instruct} checkpoint and a block-diffusion procedure with one-sided length growth.
More importantly, \Framework{} supports \emph{bidirectional} length adjustment (both expansion and contraction) under
general constrained generation, which is essential for prefix--suffix FIM and settings with large length variance.
A broader open-source release of dLLM-Var covering more backbones and more general constrained generation settings would
enable a more comprehensive comparison in future work.

\section{Other Related Work}
\label{app:other_related_work}

Several recent works have explored diffusion-based language models (DLLMs) from complementary
perspectives.
\textbf{UltraLLaDA}~\cite{UltraLLaDA} focuses on extending the context window of DLLMs to very long
sequences (up to 128K tokens), addressing scalability in context length rather than the inference-time
determination of unknown span length.
\textbf{Diffusion-LM}~\cite{Diffusion-LM} and \textbf{S3}~\cite{S3} demonstrate that diffusion-based
formulations can be advantageous for conditional generation, but they operate under fixed-length
generation assumptions and do not address variable-length inference.

Other works investigate efficiency and decoding acceleration for DLLMs.
\textbf{DPad}~\cite{DPad} and \textbf{Sparse-dLLM}~\cite{Sparse-dLLM} study architectural and systems-level
optimizations, such as attention sparsification and KV-cache management, to improve inference speed.
These methods are orthogonal to our focus, as they assume a predetermined generation length and do not
consider length selection as part of the inference process.

Finally, \textbf{Theoretical Benefit and Limitation of Diffusion Language Models}~\cite{Limitation}
analyzes fundamental challenges of DLLMs, explicitly noting the difficulty of variable-length
generation.
Our work is complementary in that it directly targets this limitation by introducing an inference-time
framework for adaptive length determination, without modifying the underlying model or training
procedure.

\section{Case Study}

\subsection{Infilling}
\tcbset{colframe = prompt, colback = prompt!25!white,breakable}

\begin{tcolorbox}[title = {Python Infilling Example}] 
\begin{lstlisting}
def string_transformation(A: str, B: str, rules: list) -> Union[int, str]:

    queue = deque([(A, 0)])  # Each element in the queue is a tuple (current_string, steps_taken)
    visited = set([A])  # Keep track of visited strings to avoid loops

    # Perform BFS
    while queue:
        current, steps = queue.popleft()
        if current == B:
            return steps
        #BEGIN INFILLING
        for source, target in rules:
            new_current = current.replace(source, target)
            if new_current not in visited:
                visited.add(new_current)
                queue.append((new_current, steps + 1))
        #END INFILLING
    return "NO ANSWER!"
\end{lstlisting}
\end{tcolorbox}

\begin{tcolorbox}[title = {C++ Infilling Example}] 
\begin{lstlisting}
int countPermutations(int n, int k, int qq[])
{
    const int N = 505, P = 998244353;
    int q[N], m, dp[N][N], jc[N], f[N], ans;

    memset(q, 0, sizeof(q));
    memset(dp, 0, sizeof(dp));
    memset(jc, 0, sizeof(jc));
    memset(f, 0, sizeof(f));
    ans = 0;

    for (int i = 1; i <= n; i++)
        q[i] = qq[i - 1];

    dp[0][0] = f[0] = 1;
    ## START INFILLING
    for (int i = 1; i <= n; i++)
    {
        f[i] = f[i - 1];
    }

    for (int i = 1; i <= n; i++)
    {
        for (int j = 0; j < i; j++)
            for (int k = 1; k <= n; k++)
                dp[i][k] = (dp[i][k] + dp[j][k - 1] * 1LL * f[i - j] % P) % P;
    }
    ## END INFILLING
    m = 0;
    for (int i = 1; i <= n; i++)
        if (q[i] > q[i + 1])
        {
            m = i;
            break;
        }

    if (m == n)
    {
        for (int i = k; i <= n; i++)
            ans = (ans + dp[n][i]) % P;
    }
    else
    {
        for (int i = m + 1; i <= n; i++)
        {
            if (i != m + 1 && (q[i - 1] > q[i] || q[i] < q[m]))
                break;

            int c = k + i - n - 1;
            if (c >= 0)
                ans = (ans + dp[m][c] * 1LL * jc[i - m - 1] % P) % P;
        }
    }

    return ans;
}

\end{lstlisting}

\end{tcolorbox}

\subsection{Completion}

\begin{tcolorbox}[title = {Math Completion Example}] 
Question:

What is the least positive integer multiple of 30 that can be written with only the digits 0 and 2?

Answer:

To find the least positive integer multiple of 30 that can be written with only the digits 0 and 2, we need to follow these steps:

1. **Identify the properties of multiples of 30:**
- A number is a multiple of 30 if it is divisible by both 3 and 10.
- A number is divisible by 3 if the sum of its digits is divisible by 3.
- A number is divisible by 10 if its last digit is 0.

2. **Form the smallest number using the digits 0 and 2:**
- The smallest number we can form using the digits 0 and 2 is 20.

3. **Check if 20 is a multiple of 30:**
- The sum of the digits of 20 is \\(2 + 0 = 2\\), which is not divisible by 3.
- The last digit of 20 is 0, so it is divisible by 10.

4. **Form the next smallest number:**
- The next smallest number we can form using the digits 0 and 2 is 220.

5. **Check if 220 is a multiple of 30:**
- The sum of the digits of 220 is \\(2 + 2 + 0 = 4\\), which is not divisible by 3.
- The last digit of 220 is 0, so it is divisible by 10.

6. **Form the next smallest number:**
- The next smallest number we can form using the digits 0 and 2 is 2220.

7. **Check if 2220 is a multiple of 30:**
- The sum of the digits of 2220 is \\(2 + 2 + 2 + 0 = 6\\), which is divisible by 3.
- The last digit of 2220 is 0, so it is divisible by 10.

Since 2220 satisfies both conditions, it is the least positive integer multiple of 30 that can be written with only the digits 0 and 2.

Therefore, the answer is:
\\\ boxed\{2220\}
\end{tcolorbox}

\begin{tcolorbox}[title = {Code Completion Example}] 

\begin{lstlisting}
from typing import List\n\n\ndef rescale_to_unit(numbers: List[float]) -> List[float]:
\"\"\" Given list of numbers (of at least two elements), apply a linear transform to that list,
such that the smallest number will become 0 and the largest will become 1\n    >>> rescale_to_unit([1.0, 2.0, 3.0, 4.0, 5.0])
[0.0, 0.25, 0.5, 0.75, 1.0]
\"\"\"
    #Completion Start
    if len(numbers) < 2:
        raise ValueError(\"The list must contain at least two elements\")
    min_val = min(numbers)
    max_val = max(numbers)
    return [(x - min_val) / (max_val - min_val) for x in numbers]
\end{lstlisting}

\end{tcolorbox}

\end{document}